\documentclass[11pt]{article}
\usepackage{acl}

\usepackage{times}
\usepackage{latexsym}
\usepackage{graphicx}
\usepackage{amsfonts}
\usepackage{amsmath} 
\usepackage{booktabs}
\usepackage{hyperref}
\usepackage{float} 
\usepackage{multirow}

\usepackage{hyperref}
\usepackage{url}
\usepackage{amssymb}

\newtheorem{theorem}{Theorem}
\newtheorem{assumption}{Assumption}
\newtheorem{definition}{Definition}

\usepackage[table]{xcolor} 
\usepackage{float}
\usepackage{subcaption}
\usepackage[T1]{fontenc}

\usepackage[utf8]{inputenc}

\usepackage{microtype}

\usepackage{inconsolata}

\title{ID-LoRA: Efficient Low-Rank Adaptation Inspired by Matrix Interpolative Decomposition}

\author{Xidian Ma, Rundong Kong, Peng Zhang\thanks{Corresponding author.}, Ruoxiang Huang, Yongyu Jiang \\
 Tianjin University, Tianjin, China \\
  \texttt{\{xindianma,krd,pzhang,huangruoxiang\_0927\}@tju.edu.cn} 
  }

\begin{document}
\maketitle
\begin{abstract}
LoRA has become a universal Parameter-Efficient Fine-Tuning (PEFT) technique that equips Large Language Models (LLMs) to adapt quickly to new tasks. 
However, when these models are scaled up, even the latest LoRA variants still introduce considerable overhead in trainable parameters. 
Conversely, aggressively lowering the rank to curb this overhead markedly degrades performance in complex multi-task settings. 
We propose ID-LoRA, a novel PEFT framework that breaks the trade-off. Its core innovation lies in extracting and reusing clustered parameter groups from the pretrained weight matrix. 
These groups are then used to form multiple low-rank components, all of which share only a single initialized trainable low-rank matrix. 
This approach cuts the number of trainable parameters while keeping the model’s capacity intact. 
We evaluate ID-LoRA on five diverse benchmarks: Mathematical Reasoning, Code Generation, MMLU, CommonsenseQA, and Safety Alignment. 
ID-LoRA outperforms both full fine-tuning and existing PEFT baselines (e.g., LoRA, DoRA, HydraLoRA) while using up to 46\% fewer trainable parameters than the standard LoRA. 
In multi-task scenarios, it surpasses LoRA and its recent variants (e.g., DoRA and HydraLoRA) on both Code and MMLU tasks, yet requires only 54\% of the trainable parameters demanded by the conventional LoRA.

\end{abstract}

\section{Introduction}

Large Language Models (LLMs)~\citep{10.5555/3495724.3495883, DBLP:journals/corr/abs-2407-21783, yang2024qwen2technicalreport} exhibit strong cross-task transfer capabilities, driving widespread adoption of a unified multi-task adaptation framework~\citep{fang2024large, ding2023parameter}. In these frameworks, a single foundational LLM is fine-tuned to meet diverse downstream requirements. However, full-parameter fine-tuning is costly. 
Low Rank Adaptation (LoRA)~\citep{hu2022lora} has become the standard choice because it is efficient. Yet it faces a critical trade-off: higher ranks linearly increase the number of trainable parameters and memory use, while a fixed low rank fails to adapt to multi-task scenarios.

To address this limitation, we propose ID-LoRA, a novel low-rank PEFT framework that fundamentally rethinks parameter efficiency. Inspired by Matrix Interpolative Decomposition (MID)~\citep{53e9a92ab7602d9703279ea4}, ID-LoRA first clusters rows of the frozen pretrained weight matrix $W\in\mathbb{R}^{d\times d}$ (See Figure~\ref{fig:enter-label} (b)) into task-specific bases $\{A_i\}_{i=1}^{k}$ using k-means constrained with minimum cluster size~\citep{Levy-Kramer_k-means-constrained_2018} with the Euclidean distance. These low-rank matrices $\{A_i\}_{i=1}^{k}$ are then frozen, and a shared trainable matrix $B$ is learned to compose the PEFT update jointly. During this process, we apply the Rank Boosting (RB) method to the intermediate inputs, further shrinking the size of the trainable matrix $B$ from $\mathbb{R}^{d\times r}$ to $\mathbb{R}^{\frac{d}{s}\times \frac{r}{s}}$. 

The key innovation of ID-LoRA is to reuse clustered parameters from the pretrained weights as frozen auxiliary computations, instead of introducing a trainable matrix A like LoRA (See Figure~\ref{fig:enter-label} (a)). ID-LoRA trains only a shared matrix $B$ to compose bases via MID, enabling higher effective rank ($r$=$128$ in Figure~\ref{fig:enter-label} (c)) with lower trainable parameters. 
The superiority of ID-LoRA in multi-task scenarios stems from its theoretical guarantees. We prove in Theorem 2 that when $m$ tasks exhibit a $k$-cluster structure (Assumption 1), ID-LoRA’s clustering-derived frozen basis matrices $\{A_i\}_{i=1}^k$ achieve tighter error bounds than the global low-rank adaptation, i.e., LoRA.

In experiments, we designed extensive single-task and multi-task scenarios. These experimental results collectively validate the effectiveness and efficiency of our method for multi-task adaptation. In single-task settings, ID-LoRA surpasses all baselines on mathematics, code generation, and safety tasks while updating only 0.56\% of the parameters of LLaMA-3-8B. 
In multi-task experiments across MMLU benchmark,  ID-LoRA reduces trainable parameters by 46\% versus LoRA and yields a 6.0\% absolute average gain over LoRA on LLaMA-3-8B. To broaden the evaluation diversity, we added three further benchmarks, namely Math, CommonsenseQA, and Code Generation, where ID-LoRA outperforms the baselines on two of them. 
The main contributions of our work are as follows:
(1) We propose a PEFT method, ID-LoRA, which reuses frozen pretrained weights as low-rank bases and trains only a single shared matrix, eliminating the need for additional matrix $A$ parameters.
(2) We prove that clustered decomposition yields tighter error bounds and improved pivot robustness, ensuring reliable performance in multi-task settings.
(3) ID-LoRA achieves or surpasses LoRA-level accuracy while reducing trainable parameters by up to 46\%.

\begin{figure*}[ht] 
    \centering
\includegraphics[width=0.85\linewidth]{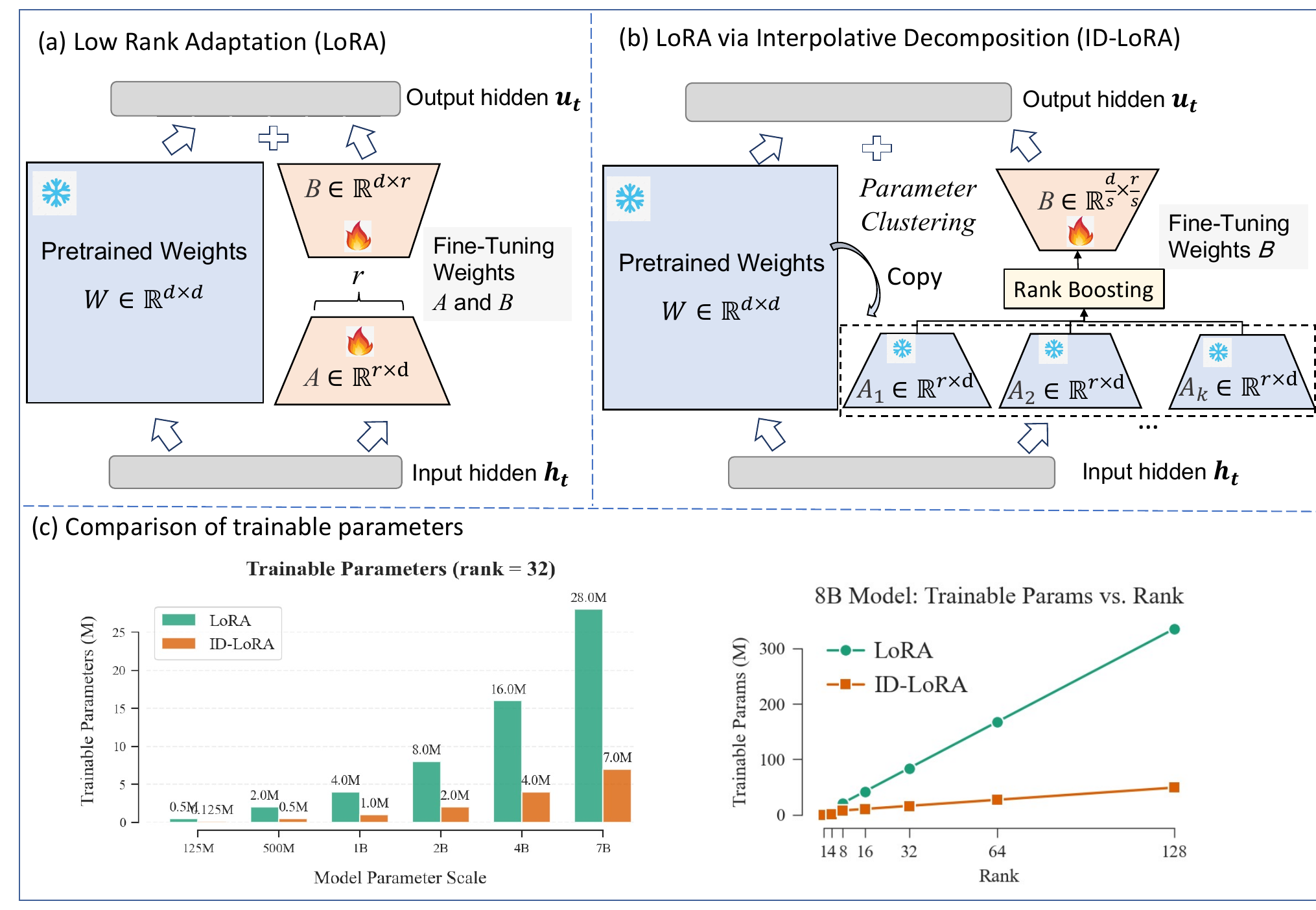}
    \caption{Architectural Comparison and Parameter Efficiency: LoRA versus ID-LoRA. (a) LoRA requires training two low-rank matrices: randomly initialized $A \in \mathbb{R}^{r\times d}$ and zero-initialized $B \in \mathbb{R}^{d\times r}$. (b) ID-LoRA employs the parameter clustering and rank boosting to generate multiple low-rank components while sharing a single B, thereby reducing trainable parameters. (c) Trainable parameters: ID-LoRA achieves $\sim 5\times$ compression versus LoRA at rank $32$ (right) and maintains superior scalability across model sizes (left).} 
    \label{fig:enter-label}
    \vspace{-10px}
\end{figure*}

\section{Related Work}

\textbf{Parameter-Efficient Fine-Tuning (PEFT).} PEFT aims to adapt LLMs to downstream tasks with minimal parameter changes. This line of research has led to a wide range of advancements~\citep{pmlr-v97-houlsby19a, hu2022lora, huang2023lorahub, tian2024hydralora}. 
The three typical PEFT methods include \textbf{Adapter}~\citep{pmlr-v97-houlsby19a}, \textbf{Prefix-tuning}~\citep{li2021prefix} and \textbf{LoRA}~\citep{hu2022lora}. 
Adapter achieves adaptation to new tasks by inserting small adapter layers between the pre-trained layers, without retraining the entire model. This method has been used in various domains~\citep{pfeiffer-etal-2021-adapterfusion, 10378091}.
Prefix-tuning optimizes task-specific prefix vectors.

\textbf{LoRA} popularized low-rank decomposition for weight updates, balancing trainable parameters and performance. 
However, LoRA faces a critical trade-off: higher ranks improve task adaptation but linearly increase parameters and memory, while lower ranks degrade performance~\citep{zhang2023lora,zhang2025lori}.
Several studies have proposed LoRA variants~\citep{liu2024dora, zhang2025lori, kopiczko2024vera} to enhance learning capacity and reduce the number of trainable parameters. Other studies\citep{he2022unifiedviewparameterefficienttransfer} have provided a unified perspective, viewing many PEFT methods as a form of adapter. 
However, these techniques show constraints in multi-task scenarios.

\textbf{Multi-LoRA Adapters.} 
Researchers have explored the benefits of modeling multiple LoRA adapters for multi-task. LoraHub~\citep{huang2023lorahub} adopts a multi-LoRA approach by training several adapters and selecting domain-specific combinations at inference.
MoELoRA~\citep{DBLP:journals/corr/abs-2310-18339} methods leverages the Mixture-of-Experts (MoE)~\citep{6797059} to implement multi-domain knowledge modeling. HydraLoRA~\citep{tian2024hydralora} uses an asymmetric structure to manage multiple LoRA adapters, and enhances parameter efficiency compared to existing symmetric approaches. These methods, however, necessitate training multiple adapters and consequently initializing additional trainable parameters.

Different from previous methods, the core innovation of ID-LoRA lies in constructing a frozen matrix by clustering the pretrained parameters, rather than initializing multiple independent LoRA adapters. This strategy significantly reduces the number of trainable parameters while maintaining adaptability to multi-task scenarios.

\section{Preliminaries}

\subsection{Low Rank Adaptation}
Low Rank Adaptation (LoRA)~\citep{hu2022lora} introduces the concept of “intrinsic rank” in the parameter update process. 
This rank represents the true degrees of freedom needed for task adaptation. It is far smaller than the total parameter count, yet sufficient for effective fine-tuning. The insight is that pretrained language models have a minimal intrinsic dimension. Fine-tuning within this small subspace matches the effect of adjusting the entire parameter space~\citep{aghajanyan-etal-2021-intrinsic}.
For the weight matrix $W$, its parameter update $\Delta W$ can be represented by low rank decomposition $A$ and $B$. In the fine-tuning phase, the adjustment is confined to low-rank matrices $A$ and $B$, while all other model parameters remain immutable. The update mechanism for these matrices is delineated in the subsequent equation:
\begin{equation}
u_t = W h_t + \Delta W x  = W h_t + \frac{\alpha}{r}BA h_t
\end{equation}
where $\alpha$ is the scaling factor, $W \in \mathbb{R}^{d \times d}$, $A \in \mathbb{R}^{r \times d}$ and $B \in \mathbb{R}^{d \times r}$, and $r < d$. To align with LoRA's initialization and ensure that $\Delta W$ is zero at the start of training, matrix $B$ in ID-LoRA is initialized to all zeros.

\subsection{Matrix Interpolative Decomposition }
Matrix Interpolative Decomposition (MID)~\citep{horn2012matrix} is a structured low-rank matrix decomposition method: it identifies key rows (or columns) as a skeleton and approximates the original matrix via multiplication with a small low-rank matrix.
In our work, we approximate $\Delta W$ by selectively reusing pretrained parameter matrix entries, reducing trainable parameters.
For MID, we can write $\Delta W = BA$, where $A = W_{[S, :]}$ is formed by taking the rows of $W$ indexed by the set $S$, and $B$ is the low-rank coefficient matrix to be learn. In this way, a small number of rows of $W$ serve as the $skeleton$ that approximates the matrix $\Delta W$. Specifically, the optimal row subset $S$ and the matrix $B$ are obtained by solving the following problem:
\begin{equation}
arg~min|| B W_{[S,:]} - \Delta W ||_{F} 
\end{equation}
where $|S|=r$, and $\| \cdot \|_{F}$ is the matrix 2-norm.
In MID, selecting the key rows from matrix $W$, namely $W_{[S,:]}$, is a non-trivial task. 
To mitigate this, we adopt a clustering strategy that generates several distinct row subsets, each acting as low-rank skeleton for a weighted a approximation of $\Delta W$.


\section{Method}
\subsection{Parameter Matrix Row Clustering}
\label{subsection:clust_weight}
Given a parameter matrix $W\in \mathbb{R}^{d \times d}$, the rows of $W$ are treated as individual elements within a collection. These row vectors are partitioned into $k$ distinct clusters using the K-Means constrained with minimum cluster size~\citep{Levy-Kramer_k-means-constrained_2018} algorithm with the Euclidean distance metric. Within each cluster $C_i (i = 1,2, ..., k)$, we select the $r$ row vectors exhibiting the smallest Euclidean distance to the cluster centroid $\mu_i$. The selected rows from each cluster $C_i$ are then aggregated to form a low-rank matrix $A_i \in \mathbb{R}^{r\times d}$. 
This process yields a set of $k$ structured low-rank matrices ${A_i, A_2, \ldots, A_k}$ that collectively capture the dominant row-space patterns of the original matrix $W$, prioritizing proximity to cluster centers.
In our experiments, $k$ is equal to $4$, and $r$ is set according to the specific experimental requirements. 

To refine the approximation, we introduce an initialized trainable shared matrix $B \in \mathbb{R}^{d\times r}$ that reconstructs the updated parameter matrix $\Delta{W}$ jointly with each cluster-specific matrix $A_i$.

\begin{equation}
\label{add:weight}
    \Delta{W} = \sum_{i=1}^{k} \alpha_i (BA_i)    
\end{equation}
where $\alpha_i$ is a scalar. As shown in Figure~\ref{fig:enter-label-model}, $\alpha_i$ = $T\odot A_ih_t $, where $\odot$ is the inner product.
This blend retains key row patterns and efficiently tracks small updates to $\Delta W$. 

\begin{figure*}[ht]
    \centering
    \includegraphics[width=0.80\linewidth]{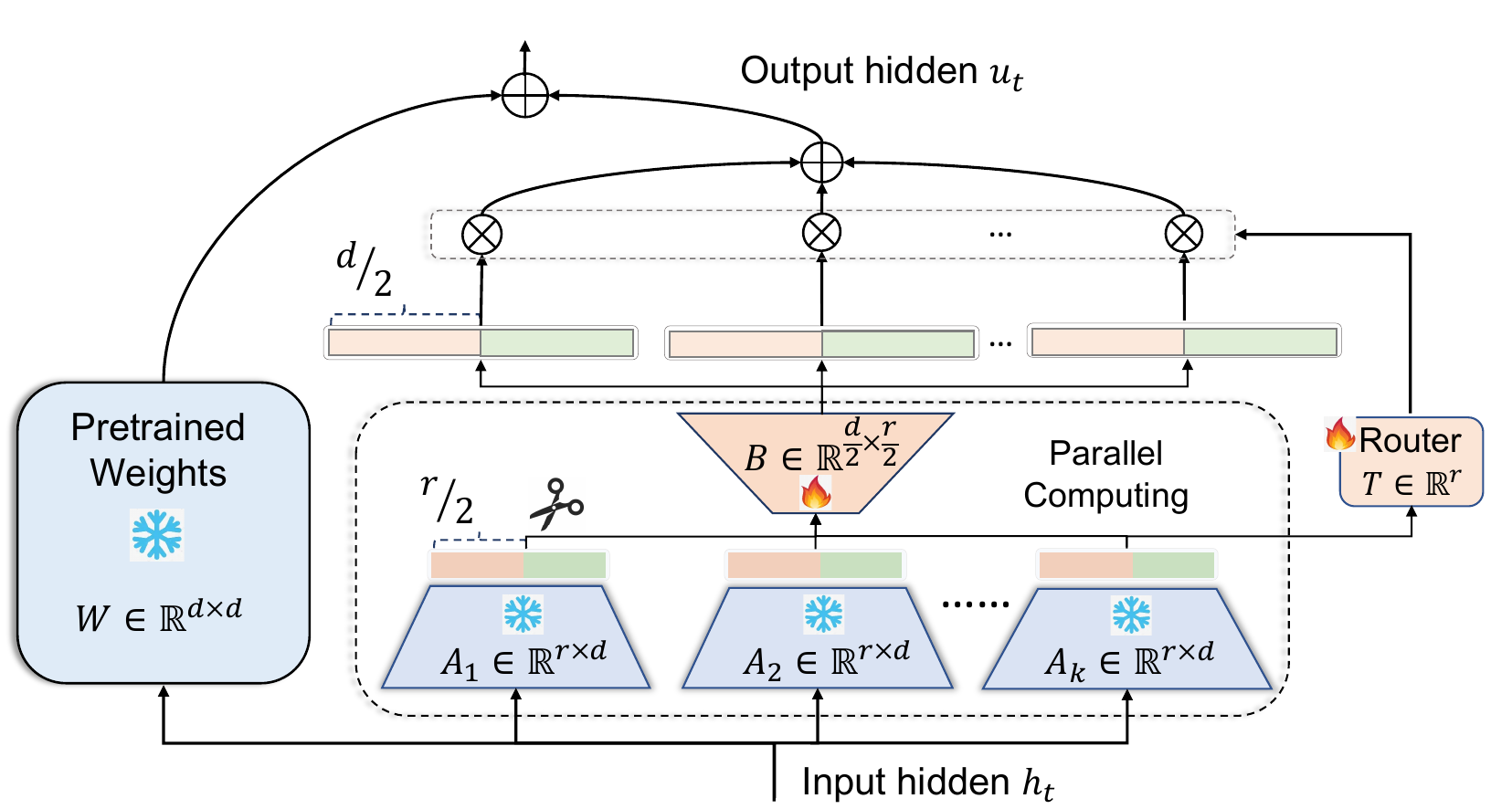}
    \caption{A diagram of the ID-LoRA architecture.} 
    \label{fig:enter-label-model}
    \vspace{-10px}
\end{figure*}

\subsection{Rank Boosting}
Relying on a single shared weight matrix $B$ significantly complicates achieving balanced and harmonious interactions among expert modules. 
To address this issue, we first partition the activation vector $x \in \mathbb{R}^r$ through matrix multiplication with $A_i$, then integrate these partitioned activation vectors via this unified shared low-rank matrix $B$, as shown in Figure~\ref{fig:enter-label-model}. The formula is as follows:
\begin{equation}
\small
  x_i = A_i h_t, ~u_t =  Wh_t  + \sum_{i=1}^{k} \alpha_i f_c([Bx_i^1,Bx_i^2])
\end{equation}
where $B \in \mathbb{R}^{\frac{d}{2} \times \frac{r}{2} }$, $A_i \in \mathbb{R}^{r\times d}$, and $f_c$ is the concate operator. $x_i^1$ and $x_i^2$ are equal divisions from vector $x_i$.
$\alpha_i$ is the output of router, i.e., $\alpha_i= T \odot x_i $, where $\odot$ is the inner product.

\textbf{Rank Analysis.}
Compared with LoRA and its multi-task-oriented variant MoELoRA~\citep{DBLP:journals/corr/abs-2310-18339}, our method simultaneously reduces the number of trainable parameters and preserves a higher rank. 
MoELoRA instantiates $k$ independent adapters $A_i\in \mathbb{R}^{r\times d}$ and $B_i \in \mathbb{R}^{d \times r}$, giving $k \times r \times (d+d)$ trainable parameters and, by the sub-additivity of matrix rank, 
\begin{equation}
Rank(A_1 + A_2) \leq Rank(A_1) + Rank(A_2)
\label{rank:eq}
\end{equation}
For a single LoRA adapter, the number of trainable parameters is $r\times (d+d)$, and its rank is $r$. 
In contrast, we introduce a single low-rank matrix $B \in \mathbb{R}^{\frac{d}{2} \times \frac{r}{2}}$ and a lightweight router $T\in\mathbb{R}^{r}$.  The parameter count is therefore only $(r\times d)/4 + r$.
During the forward pass, the outputs of $k$ fixed matrices $\{A_1, A_2, \ldots, A_k\}$ are multiplied with $B$ and then concatenated, yielding a representation whose rank is at most $k\times r$. 
Consequently, while employing significantly fewer trainable parameters, ID-LoRA attains a rank superior to that of both LoRA and MoELoRA.

\subsection{Theoretical Analysis of Clustered Decomposition}
\label{theoretical-tool}
This section provides a concise theoretical justification for the two principal strengths of PMRC: (1) its ability to enhance multi-task learning via the extraction of multiple low-rank subspaces, and (2) its robustness to the choice of principal components.

First, we introduce three preliminary assumptions:
(1) Task Clustering Structure: the optimal parameters of related tasks form disjoint groups;
(2) Task Parameter Sharing: a single base matrix can be composed on-the-fly to yield task-specific parameters;
(3) Pivot Sensitivity: the accuracy of standard low-rank factorizations, specifically the Column-Row-Update (CUR) decomposition as formalized by~\citet{1956On}, demonstrates dependence on the optimal selection of pivot indices during basis construction.
\begin{assumption}[Task Clustering Structure] Let $\{W^*_i\}_{i=1}^m \subset \mathbb{R}^{d_1 \times d_2}$ denote the optimal parameter matrices for $m$ related tasks. We assume these matrices can be partitioned into $k$ disjoint clusters $\mathcal{C}_1, \mathcal{C}_2 \ldots, \mathcal{C}_k$ (with $k<m$) 
such that:
\begin{equation}
   \forall i,j \in \mathcal{C}_l, \quad \text{Rank}(W^*_i - W^*_j) \leq r_l < \min(d_1,d_2) 
\end{equation}
where $\mathcal{C}_l$ denotes the $l$-th cluster and $r_l$ is the intrinsic rank difference within the cluster.
\end{assumption} 
\begin{assumption}[Task Parameter Sharing] We assume the existence of a shared feature matrix $B \in \mathbb{R}^{d_2 \times r_g}$ with $r_g < d_2$, such that every optimal task-specific parameter matrix can be written as:
\begin{equation}
    W^*_i = B A_i + E_i, \quad \|E_i\|_F^2 \leq \epsilon
\end{equation}
where $A_i \in \mathbb{R}^{r_g\times d_1}$ is a task-specific and $E_i$ is the reconstruction error.
\end{assumption}
\begin{definition} [Pivot Sensitivity]
The \textup{CUR} decomposition's approximation accuracy crucially depends on pivot selection, with reconstruction errors escalating sharply when the pivots inadequately capture the principal row or column subspaces,
\begin{equation}
    \max_{\text{bad pivots}} \|W - \tilde{W}_{\text{CUR}}\|_F^2 > \min_{\text{good pivots}} \|W - \tilde{W}_{\text{CUR}}\|_F^2
\end{equation}
\end{definition}

Then, we proceed to formalize the guarantees of the PMRC through two theorems, Clustering Reconstruction and Cluster-Pivot Stability.
\begin{theorem}
[Clustering Reconstruction] Under Assumptions 1 and 2, the clustering-aware decomposition achieves a tighter reconstruction error bound compared to global low rank decomposition:
\begin{equation}
    \sum_{i=1}^m \|W^*_i - \tilde{W}_i\|_F^2 \leq \sum_{i=1}^m \|W^*_i - \tilde{W}^{\text{global}}_i\|_F^2 - \Delta
\end{equation}
where $\Delta = \sum_{l=1}^k \sum_{i \in \mathcal{C}_l} \|B(A_{l(i)} - A^{\text{global}}) \|_F^2 \geq 0$. The inequality becomes strict ($\Delta > 0$) when tasks exhibit clustering structure.
\end{theorem}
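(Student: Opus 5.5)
The plan is to make precise what $\tilde W_i$ and $\tilde W^{\text{global}}_i$ are, so that the claimed inequality becomes a Pythagorean identity in the Frobenius inner product. Fix the shared matrix $B$ from Assumption 2. Both decompositions use column space $\mathrm{col}(B)$.

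\begin{itemize}
\item The global decomposition uses one coefficient matrix for all tasks, $\tilde W^{\text{global}}_i = BA^{\text{global}}$. Here $A^{\text{global}}$ minimizes $\sum_{i=1}^m \|W^*_i - BA\|_F^2$.
\item The clustering-aware decomposition uses one coefficient matrix per cluster, $\tilde W_i = BA_{l(i)}$. Here $A_l$ minimizes $\sum_{i\in\mathcal C_l}\|W^*_i - BA\|_F^2$.
\end{itemize}

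Let $P = B(B^\top B)^{+}B^\top$ be the orthogonal projector onto $\mathrm{col}(B)$. The normal equations give the minimizers explicitly: $BA_l = P\bar W_l$ and $BA^{\text{global}} = P\bar W$. Here $\bar W_l$ is the mean of $\{W^*_i\}_{i\in\mathcal C_l}$ and $\bar W$ is the overall mean.

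\textbf{Step 1: per-cluster decomposition.} Within each cluster write
\[
W^*_i - BA^{\text{global}} = \bigl(W^*_i - BA_{l(i)}\bigr) + B\bigl(A_{l(i)} - A^{\text{global}}\bigr).
\]
Expanding the squared norm and summing over $i\in\mathcal C_l$, the cross term is
\[
2\Bigl\langle \sum_{i\in\mathcal C_l}\bigl(W^*_i - BA_l\bigr),\; B\bigl(A_l - A^{\text{global}}\bigr)\Bigr\rangle_F .
\]
This vanishes because $\sum_{i\in\mathcal C_l}(W^*_i - P\bar W_l) = |\mathcal C_l|(I-P)\bar W_l$, which is orthogonal to $\mathrm{col}(B)$. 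Equivalently, it is the first-order optimality condition for $A_l$.

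\textbf{Step 2: sum over clusters.} Summing the Step 1 identity over $l$ gives the exact identity
\[
\sum_i\|W^*_i-\tilde W^{\text{global}}_i\|_F^2=\sum_i\|W^*_i-\tilde W_i\|_F^2+\Delta .
\]
This implies the stated inequality, with equality. $\Delta\ge 0$ is immediate since it is a sum of squared norms.

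\textbf{Step 3: strictness.} We need $\Delta>0$ whenever some $P\bar W_l\neq P\bar W$, i.e. the cluster centroids differ as seen through $\mathrm{col}(B)$. To connect this with Assumption 1, use Assumption 2 to write $\bar W_l = B\bar A_l + \bar E_l$, where $\bar A_l$ and $\bar E_l$ are the cluster means of the $A_i$ and $E_i$. Then $P\bar W_l - P\bar W = B(\bar A_l-\bar A) + P(\bar E_l-\bar E)$. The error term is $O(\sqrt\epsilon)$ in Frobenius norm by convexity of $\|\cdot\|_F^2$. So $\Delta>0$ as soon as the between-cluster separation $\|B(\bar A_l-\bar A)\|_F$ exceeds $O(\sqrt{\epsilon})$. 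I would take this as the formal meaning of ``tasks exhibit clustering structure'': the partition has small within-cluster rank (Assumption 1) but distinct, non-degenerate cluster means.

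\textbf{Main obstacle.} The hard part is not the algebra but pinning down the definitions so the theorem is true as stated. The paper does not say whether $A_l$ is the least-squares optimum or the $k$-means-selected rows of $W$. In the latter case the cross term in Step 1 need not vanish, and only the weaker inequality holds, up to an additive error controlled by $\epsilon$ and the within-cluster rank bounds $r_l$. I would first prove the clean identity under the optimal-coefficient interpretation. I would then note that replacing $A_l$ by any feasible choice can only increase the cluster-wise error, and bound that increase by the distance from the selected rows to the optimal $A_l$.
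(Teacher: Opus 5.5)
Your proof is correct under the reading you fix ($B$ from Assumption 2 held fixed; $A_l$ and $A^{\text{global}}$ the per-cluster and global least-squares coefficients), and it makes rigorous what the paper only sketches.

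The paper substitutes Assumption 2 into both residuals, obtaining $B(A_i - A_{l(i)}) + E_i$ and $B(A_i - A^{\text{global}}) + E_i$. It treats $A_{l(i)}$ as the centroid of the $A_i$ in its cluster, asserts $\|A_i - A_{l(i)}\|_F^2 \le \|A_i - A^{\text{global}}\|_F^2$, and concludes $\Delta \ge 0$. It never handles a cross term, so the $-\Delta$ on the right is not actually derived. Moreover, the centroid inequality holds only summed over a cluster, not task by task.

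Your normal-equation orthogonality is exactly the missing ingredient. It kills the cross term and turns the claim into the within/between-cluster (K\"onig--Huygens) identity, with equality. The two readings agree if Assumption 2 is taken, without loss of generality, with $A_i = B^{+}W^*_i$. This choice only shrinks $\|E_i\|_F$, makes $E_i \perp \mathrm{col}(B)$, and gives $B\bar A_l = P\bar W_l$. For a generic admissible choice of $A_i$, the paper's version leaves a cross term $2|\mathcal{C}_l|\langle \bar E_l, B(\bar A_l - A^{\text{global}})\rangle_F$ of no definite sign, and the inequality can then fail.

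Your route also shows more than the paper. The identity needs neither Assumption 1 nor optimality of $A^{\text{global}}$; only optimality of the $A_l$ enters. You also get an explicit sufficient condition for $\Delta > 0$, namely $\|B(\bar A_l - \bar A)\|_F > 2\sqrt{\epsilon}$ for some $l$, where the paper appeals only to ``significant'' inter-cluster discrepancy.

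The caveat, which you already raise, is that reading $A_l$ as a least-squares coefficient is a modeling choice. The appendix's Definition 2 (rank-$r_l$ SVD factors of the centroids, with $B$ learned) is a different construction, and neither your argument nor the paper's covers it literally.
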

\begin{theorem}[Cluster-Pivot Stability]
Under Definition 1, the clustering-based local pivot selection ensures a tighter expected error bound compared to global pivot selection:
\begin{equation}
\small
\mathbb{E}_{\mathcal{P}}\left[\max_{1 \leq i \leq m} \|W_i^* - \tilde{W}_i^{\textup{CUR}}\|_F^2\right] \geq \mathbb{E}_{\mathcal{P}_l}\left[\max_{1 \leq i \leq m} \|W_i^* - \tilde{W}_i\|_F^2\right],
\end{equation}
where $\tilde{W}_i^{\textup{CUR}}$ is the reconstruction parameter using \textit{global} pivots on all tasks, $\tilde{W}_i$ is reconstruction parameter using \textit{cluster-local} pivots (per-cluster \textup{CUR} on $\mathcal{C}_l$), and $\mathcal{P}_l$ is the set of pivots selected from cluster $\mathcal{C}_l$'s centroid matrix.
\end{theorem}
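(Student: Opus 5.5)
The plan is to prove the expected-max inequality pointwise in a coupling, using only the contrast in Definition~1 between bad and good pivots. The pivot-selection randomness is modeled as follows. $\mathcal{P}$ is a distribution over global pivot sets drawn from all $m$ tasks. $\mathcal{P}_l$ is the per-cluster selection in which the pivots for tasks in $\mathcal{C}_l$ are drawn from the centroid matrix of $\mathcal{C}_l$. Throughout I fix the natural reading that a cluster-local pivot set is at least as good, for every task in its cluster, as any global pivot set. In the sense of Definition~1, it captures the principal row/column subspace of that cluster.

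The key steps are these.

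\textbf{Per-task comparison.} First, for each cluster $\mathcal{C}_l$ and each $i\in\mathcal{C}_l$, I will show that the CUR error with centroid pivots is at most the CUR error with any global pivot set:
\begin{equation}
\|W_i^*-\tilde W_i\|_F^2\le \min_{P}\|W_i^*-\tilde W_i^{\textup{CUR}}(P)\|_F^2 .
\end{equation}
The argument writes $W_i^*=\bar W_l+(W_i^*-\bar W_l)$, where $\bar W_l$ is the cluster centroid. By Assumption~1, $W_i^*-\bar W_l$ is an average of differences $W_i^*-W_j^*$ with $j\in\mathcal{C}_l$, each of rank at most $r_l$. Its row and column spaces therefore lie in the span of the cluster's row and column spaces. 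Consequently, pivots chosen from $\bar W_l$ with $|P_l|\ge$ the effective cluster rank span the relevant subspace. They are thus ``good pivots'' for every $i\in\mathcal{C}_l$ simultaneously.

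A global pivot set of the same size must instead cover the union of the $k$ cluster subspaces. If that union has dimension exceeding the pivot budget, some cluster's principal subspace is missed. Such a set is then a ``bad pivot'' set for at least one task.

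\textbf{Pass to the maximum.} Next I take the maximum over $i$. Monotonicity of $\max$ under a pointwise inequality gives, for every realization,
\begin{equation}
\max_i\|W_i^*-\tilde W_i\|_F^2\le \max_i\|W_i^*-\tilde W_i^{\textup{CUR}}\|_F^2 .
\end{equation}
Because the left side is bounded by a quantity that does not depend on the global draw, taking $\mathbb{E}_{\mathcal{P}_l}$ on the left and $\mathbb{E}_{\mathcal{P}}$ on the right preserves the inequality. This is the stated result.

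\textbf{Strictness.} When the global sampler has positive probability of producing a bad pivot set for some task, Definition~1 makes that term strictly larger, which yields strict inequality.

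\textbf{Main obstacle.} The hard step is the per-task comparison. Pivots taken from the centroid $\bar W_l$ need not be optimal for an individual $W_i^*$. A CUR error bound is also only quasi-optimal: for example, the bound $\|W-CUR\|_F\le (1+\epsilon)\|W-W_r\|_F$ holds for leverage-score sampling.

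I would first try to make the comparison rigorous by projection. Let $\Pi_l$ be the projection onto the span of the centroid pivots. I would bound $\|W_i^*-\tilde W_i\|_F$ by $\|(I-\Pi_l)W_i^*\|_F$, and show that this vanishes, or is at most the within-cluster residual, using the rank bound $r_l$. If exact comparison fails, I would fall back to comparing upper bounds. That route proves the cluster-local bound $\sum_i\|(I-\Pi_l)(W_i^*-\bar W_l)\|_F^2$ is dominated by the worst-case global bound from Definition~1. This interprets the theorem as a statement about expected error bounds, which is consistent with its informal phrasing.
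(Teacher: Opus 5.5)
Your proof rests entirely on the per-task dominance $\|W_i^*-\tilde W_i\|_F^2\le\min_P\|W_i^*-\tilde W_i^{\textup{CUR}}(P)\|_F^2$, and this step fails. The ``natural reading'' you fix at the outset is just the pointwise form of the conclusion, and neither Assumption~1 nor Definition~1 supplies it. Assumption~1 bounds only $\operatorname{rank}(W_i^*-W_j^*)$; it says nothing about the rank of the centroid $M_l$. Pivots drawn from $M_l$ span at most the column/row space of $M_l$. Meanwhile $W_i^*-M_l$, an average of $|\mathcal{C}_l|-1$ such differences, can have rank up to $(|\mathcal{C}_l|-1)r_l$, with components outside that space.

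More decisively, your minimum over $P$ is taken separately for each task, so $P$ may be tailored to $W_i^*$ alone. If the pivot budget is at least $\operatorname{rank}W_i^*$, a CUR of $W_i^*$ on well-chosen pivots is exact and the right-hand side is $0$. On the other side, in the paper's setup $\tilde W_i=C_{\mathcal{P}_l}U_{\mathcal{P}_l}R_{\mathcal{P}_l}=\tilde M_l$ for every $i\in\mathcal{C}_l$. The triangle inequality then gives $\max_{i\in\mathcal{C}_l}\|W_i^*-\tilde W_i\|_F\ge\tfrac12\operatorname{diam}(\mathcal{C}_l)>0$ as soon as two tasks in the cluster differ. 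In your projection variant, even when the centroid pivots span the column space of $M_l$, the residual equals $\|(I-\Pi_l)(W_i^*-M_l)\|_F$. That is positive whenever $W_i^*-M_l$ has a column outside that span. So the realization-wise inequality you want to push through the max and the expectation is false precisely in the regime the theorem is about.

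The fallback does not rescue it. Showing that an upper bound on the local error lies below the worst-case global error of Definition~1 cannot order the two quantities; you would need a \emph{lower} bound on the global side. And a worst-case value says nothing about $\mathbb{E}_{\mathcal{P}}$, which averages over good and bad draws.

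The missing idea is a sandwich at the level of expectations, which is how the paper argues. On the global side, a uniformly drawn pivot set misses the distinctive direction $\mathbf{v}_{l^*}$ of some cluster with probability at least $\eta_{l^*}>0$. On that event every task in $\mathcal{C}_{l^*}$ has error at least $\gamma_{l^*}$, so $\mathbb{E}_{\mathcal{P}}[\max_i\|W_i^*-\tilde W_i^{\textup{CUR}}\|_F]\ge\gamma_{l^*}\eta_{l^*}$. On the local side, $\|W_i^*-\tilde W_i\|_F\le\|M_l-\tilde M_l\|_F+\|W_i^*-M_l\|_F$. The first term vanishes on a coverage event of probability at least $1-\epsilon_l$, giving $\mathbb{E}[\mathcal{E}^{\text{local}}_l]\le\delta_l+\mathcal{O}(\epsilon_l)$. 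The theorem then follows from the separation $\gamma_{l^*}\eta_{l^*}>\max_l(\delta_l+\mathcal{O}(\epsilon_l))$.

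Be aware that this quantitative condition is not contained in Definition~1, which is only a qualitative max-versus-min statement, even though the paper cites it as such. Similarly, the step $\|M_l-\tilde M_l\|_F=0$ needs $\operatorname{rank}M_l\le r_l$, which Assumption~1 does not give. So you are right that something beyond the stated hypotheses must be assumed. But the right extra hypothesis is an expectation-level gap between a global lower bound and a local upper bound, not pointwise dominance of centroid pivots over every global pivot set, which is generally false.
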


Our theoretical framework is built on three mutually reinforcing assumptions that jointly yield two key guarantees. \textbf{Theorem 1} invokes Assumption 1 and Assumption 2 to establish that, whenever task clusters exist ($\Delta >$  0), cluster-aware decomposition strictly outperforms any global low-rank approximation. \textbf{Theorem 2} leverages Assumption 1 and Definition 1 to show that cluster-local pivot selection markedly reduces worst-case error sensitivity. Together, the results deliver both multi-task gains through structured decomposition and robustness to pivot choice, thereby establishing the method’s soundness. Full proofs are provided in Appendix~\ref{sec: Theoretical Framework of ID-LoRA}.

\begin{table*}[ht]
    \centering
    \scalebox{0.85}{
    \begin{tabular}{l|c|c|ccc|c}
        \toprule
         \multirow{2}{*}{\textbf{Method}}  &\multirow{2}{*}{\textbf{\# Params (\%)}}& \multirow{2}{*}{\textbf{GSM8K}} & \multicolumn{3}{c|}{\textbf{HumanEval}} & \multirow{2}{*}{\textbf{HEx-PHI}}\\
          && & \textbf{Pass@1} & \textbf{Pass@5} & \textbf{Pass@10}  & \\
        \midrule
        \rowcolor{gray!12} 
        \multicolumn{7}{l}{\textit{LLaMA-3-8B}} \\
          FFT& 100\%& 58.8& 30.5& 39.3&41.7 & 94.8\\
         LoRA   &1.03\% & 62.8 & 34.7 & 46.4 & 50.8  & 91.6\\
         DoRA   &1.05\% & \textbf{63.5} & 33.1 & 44.0 & 48.6  & 93.6\\
         HydraLoRA &1.10\% & 62.8 & 33.4 & 44.7 & 49.3  & 94.2\\
         ID-LoRA(Ours) & 0.56\% & $\textbf{63.5}^\dagger$ & $\textbf{37.5}$ & $\textbf{49.7}$ & \textbf{55.0} &  $\textbf{95.3}^\dagger$\\
         
        \midrule
        \rowcolor{gray!12}
        \multicolumn{7}{l}{\textit{Mistral-7B}} \\
         FFT & 100\% & 55.5& 29.1& 38.5&40.4 & 94.1\\
         LoRA   &  1.15\% & 57.8 & \textbf{33.8} & 42.4 & 45.3  & 91.9\\
         DoRA   &  1.16\% & 57.5 &  33.7 & 42.6 & 46.8 & 95.3\\
         HydraLoRA &1.22\% & \textbf{58.3} & 33.5 & 41.4 & 44.2  & 97.0\\
         ID-LoRA (Ours) & 0.62\%  & {58.1} & {33.5} & $\textbf{43.3}^\dagger$ & $\textbf{46.9}^\dagger$ &  $\textbf{97.2}^\dagger$\\
        \bottomrule
    \end{tabular}}
     \caption{Performance comparison of different adaptation methods on single tasks, including GSM8K (Math), HumanEval (Code), and HEx-PHI (Safety) benchmarks using LLaMA-3 and Mistral. \textbf{Bold} indicates the best-performing method. $\dagger$ indicates statistically significant improvement, i.e., Wilcoxon signed-rank test ($p < 0.05$) against LoRA.}
     \label{tab:gsm_humaneval_single}
\end{table*}

\begin{table*}[ht]
    \centering
    \scalebox{0.85}{
    \begin{tabular}{l|c|c|c|c|ccc}
        \toprule
         \multirow{2}{*}{\textbf{Method}}  &\multirow{2}{*}{\textbf{\# Params (\%)}}& \multirow{2}{*}{\textbf{MATH}} & \multirow{2}{*}{\textbf{MMLU}} & \multirow{2}{*}{\textbf{CQA }}& \multicolumn{3}{c}{\textbf{CODE}} \\
          &&&& & \textbf{Pass@1} & \textbf{Pass@5} & \textbf{Pass@10}   \\
        \midrule
        \rowcolor{gray!12} 
        \multicolumn{8}{l}{\textit{LLaMA-3-8B}} \\
         FFT  & 100\% & 52.2 & 37.5 & 38.7 & 31.2  & 40.1 & 43.9\\
         LoRA   & 1.03\% & \textbf{60.8} & 48.1 & 46.9 & 39.0  & 48.8 & 51.9\\
         DoRA   & 1.05\% & 59.4 & 41.5 & 35.0 & 38.3  & 50.2 & 55.7\\
         HydraLoRA   & 1.10\%  & 60.1 & 47.5 & 44.1 & 33.2  & 43.9 & 47.8\\
         MoELoRA   & 1.09\%  & 53.2 & 41.5 & 35.4 & 28.9  & 41.5 & 47.1\\
         ID-LoRA (Ours) & \textbf{0.56\%}& 59.6 & \textbf{51.0} & \textbf{47.4} & \textbf{41.7} &  \textbf{54.5} & \textbf{58.5}\\
         
        \midrule
        \rowcolor{gray!12}
        \multicolumn{8}{l}{\textit{Mistral-7B}} \\
        FFT  & 100\% & 41.9 & 46.7 & 62.6 & 23.0  & 30.2 & 33.9\\
         LoRA   &1.15\%& \textbf{52.4} & 59.3 & 70.8 & 32.3  & 41.0 & 45.2\\
         DoRA   &1.16\% & 51.8 &  59.4 & 71.1 & \textbf{32.5}  & 40.1 & 43.3\\
         HydraLoRA   & 1.22\% & \textbf{52.4} & 60.4 & 71.3 & 32.4  & 41.7 & 45.6\\
         MoELoRA   & 1.21\% & 44.9 & 60.4 & 71.9 & 29.1  & 38.3 & 41.9\\
         ID-LoRA (Ours) & \textbf{0.62\% }& 52.0 & $\textbf{60.6}^\dagger$ & \textbf{74.0} & 32.4 &  $\textbf{41.8}^\dagger$ & $\textbf{45.7}^\dagger$\\
        \bottomrule
    \end{tabular}}
    \caption{Performance comparison of different adaptation methods on multi tasks, including math (GSM8K), code (HumanEval), world knowledge (MMLU), and common sense (CommonsenseQA) benchmarks using LLaMA-3 and Mistral. \textbf{Bold} indicates the best-performing method. “CQA” is the abbreviation for for CommonsenseQA. $\dagger$ indicates statistically significant improvement, i.e., Wilcoxon signed-rank test ($p < 0.05$) against LoRA.}
    \label{tab:gsm_humaneval}
\end{table*}

\section{Experiments}

To comprehensively evaluate ID-LoRA, we conduct experiments as follows:
(1) Single-task experiments assessing general adaptation capability.
(2) Multi-task experiments testing cross-domain transferability.
(3) Efficiency analysis of time and memory.
(4) Ablation studies on core components (i.e., PMRC and RB) and a performance comparison between LoRA and ID-LoRA under comparable trainable-parameter budgets.
We conducted additional experiments to evaluate how the number of pretrained parameter clusters affects ID-LoRA's performance in multi-task scenarios. These results, along with other findings not presented in the main paper, are provided in Appendix~\ref{sec: Ablation}.

\subsection{Experimental Setup}
We conduct systematic comparisons across both single-task and multi-task settings to holistically evaluate ID-LoRA’s adaptability and scalability.

\textbf{Single-Task}. Three capabilities (Math, Code and Safety) are used to evaluate. (i) Mathematical Reasoning (\textbf{Math}): We fine-tune on GSM8K~\citep{DBLP:journals/corr/abs-2110-14168} training split and evaluated on the test split. (ii) Code Generation (\textbf{Code}): We fine-tune on dataset CodeAlpaca~\citep{chaudhary2023codealpaca} and evaluated using pass@1, pass@5, and pass@10 on HumanEval~\citep{chen2021evaluating}. (iii) \textbf{Safety}: We fine-tune on Saferpaca~\citep{bianchi2023safetytuned}, which extends Alpaca-Cleaned~\citep{taori2023stanford} with 2,000 safety instructions. Safety performance is assessed by measuring the refusal rate on harmful queries from HEx-PHI~\citep{qi2023finetuning}.
    
\textbf{Multi-Task}. We conduct multi-task evaluation using the benchmark Multi-task Instruction Tuning: We jointly fine-tune on the concatenated training sets of Alpaca \citep{alpaca}, GSM8k \citep{DBLP:journals/corr/abs-2110-14168} training split and 50\% data of CodeAlpaca \citep{chaudhary2023codealpaca}. The mutli-task capability is evaluted on datasets including GSM8K test split (math), HumanEval \citep{chen2021evaluating} (code), MMLU \citep{MMLU} (world knowledge) and CommonsenseQA \citep{commonsenseqa} (common sense). 

\textbf{Baselines}. In single-task and multi-task experiments, we compare ID-LoRA with full fine-tuning (FFT), LoRA \citep{hu2022lora}, DoRA \citep{liu2024dora} and HydraLoRA \citep{tian2024hydralora}.
In the multi-task setting, we expand the baseline pool established for single-task fine-tuning by additionally incorporating MoELoRA \citep{DBLP:journals/corr/abs-2310-18339}, a dedicated multi-task adaptation method. 

\textbf{Model Configurations and Hardware}.
All experiments were conducted on two NVIDIA A800 GPUs using LLaMA-3-8B~\citep{grattafiori2024llama3} and Mistral-7B~\citep{jiang2023mistral} as base models. To ensure a fair comparison, LoRA, DoRA, and HydraLoRA share identical hyper-parameter settings with ID-LoRA. Ablation studies additionally employ LLaMA-3.2-3B~\citep{llama32tech2024}. Complete hyperparameter details are provided in Appendix ~\ref{sec: Hyper Parameter Settings}.
\subsection{Experiments on Single-Task}
Table~\ref{tab:gsm_humaneval_single} demonstrates that ID-LoRA achieves superior parameter efficiency training only 0.56\% of LLaMA-3-8B’s parameters and 0.62\% for Mistral-7B while consistently matching or outperforming all baselines. On the GSM8K mathematical reasoning task, ID-LoRA attains 63.5\% with LLaMA-3, equaling DoRA’s accuracy but with 50\% fewer trainable parameters. For code generation (HumanEval), ID-LoRA delivers the highest Pass@10 scores (55.0\% for LLaMA-3, 46.9\% for Mistral), surpassing LoRA by 4.2–8.4\% absolute gains. 
These results verifie that ID-LoRA’s low rank decomposition preserves critical task capabilities without compromising safety. 

\subsection{Experiments on Multi-Task}


Table~\ref{tab:gsm_humaneval} shows that ID-LoRA outperforms LoRA and its variants across four domains while using only half the trainable parameters (0.56\% on LLaMA-3 and 0.62\% on Mistral). The key driver is the clustering-guided rank allocation: it concentrates updates on task-relevant subspaces, unlocking higher effective ranks without increasing the parameter budget. Consequently, ID-LoRA leads in five of six LLaMA-3 tasks and three of six Mistral tasks, with particularly large gains on knowledge-intensive benchmarks (e.g., MMLU and CommonsenseQA). On code generation it sets new Pass@10 records, whereas on math tasks it remains on par—suggesting that global weight updates are still crucial for mathematical reasoning.


The parameter configurations in Tables~\ref{tab:gsm_humaneval_single} and~\ref{tab:gsm_humaneval} reveal that while HydraLoRA employs a lower rank than other methods, it exhibits a larger number of trainable parameters. For instance, in Table~\ref{tab:gsm_humaneval} using Mistral-7B, HydraLoRA operates at a lower rank (r=16) compared to standard LoRA (r=32), yet requires more trainable parameters (1.22\% versus 1.15\%). This discrepancy stems from the parameter overhead introduced by HydraLoRA's asymmetric architecture. Conversely, the proposed ID-LoRA method achieves a higher rank (r=128) while maintaining only half the trainable parameters of standard LoRA. This dual advantage primarily originates from two key innovations: (1) constructing frozen matrices through clustering of pre-trained parameters, and (2) incorporating the RB algorithm.

\subsection{Efficiency Analysis}
Figure~\ref{fig:both} shows the time and extra memory overhead during inference of different methods in multi-task experiments. During inference, since the adapters introduced by LoRA and DoRA can be merged with the original parameters, the model structure remains unchanged, avoiding additional memory overhead and time latency. 
In contrast, the adapters introduced by MoELoRA and HydraLoRA can not be merged, resulting in extra memory usage and inference latency. The time consumption increased by 61.6\% and 36.5\% compared to LoRA. 
In our method, we only need to retain the clustering location information and the matrix $B$. ID-LoRA achieves a 45\% reduction in extra memory usage relative to both MoELoRA and HydraLoRA, while incurring a minimal time overhead of only 0.5\% compared to LoRA.


\begin{figure*}[htp]
    \centering
    \begin{subfigure}{0.45\textwidth}
        \centering
        \includegraphics[width=\linewidth]{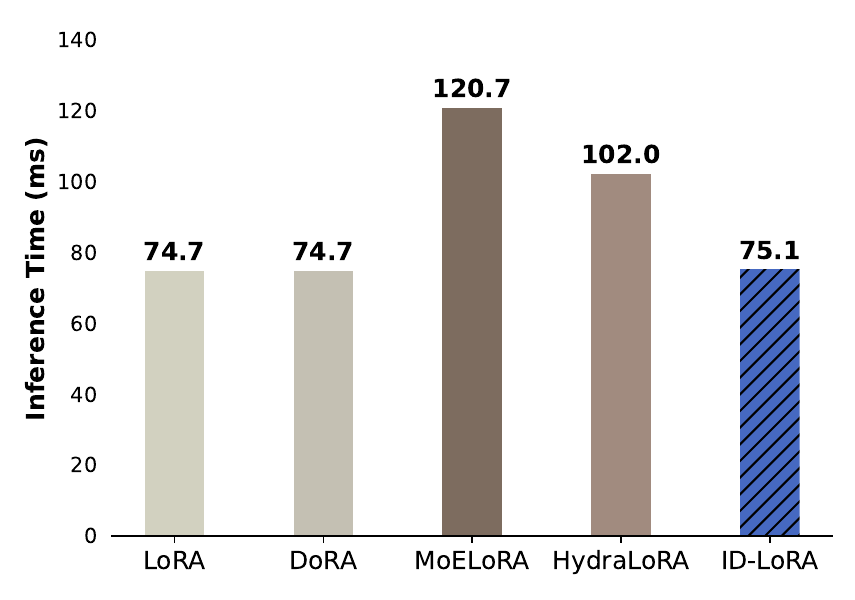}
        \caption{Inference Time}
        \label{fig:sub1}
    \end{subfigure}
    \hspace{-0.01\textwidth} 
    \begin{subfigure}{0.49\textwidth}
        \centering
        \includegraphics[width=\linewidth]{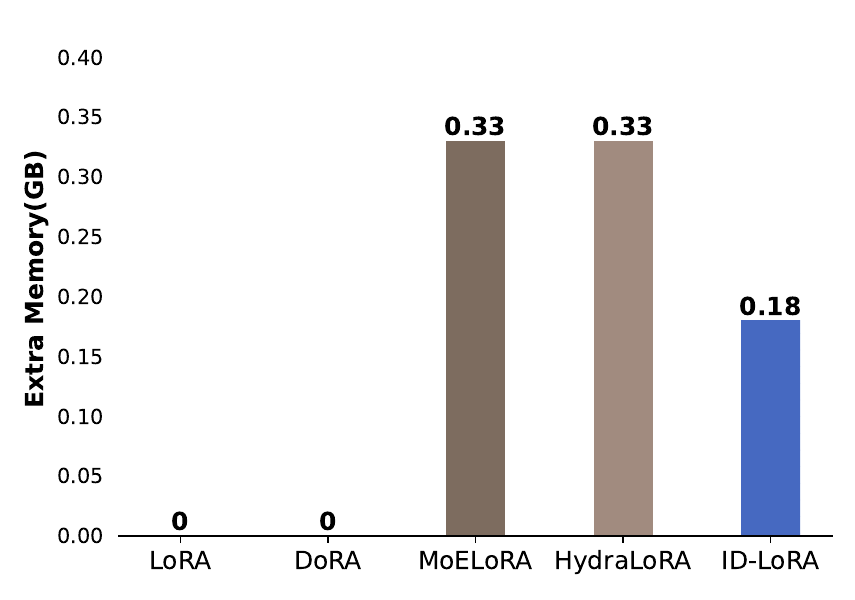}
        \caption{Extra GPU Memory}
        \label{fig:sub2}
    \end{subfigure}
    \caption{The inference time and extra memory overhead of different adaptation methods under the same hyperparameter settings as the multi-task experiments on LLaMA-3-8B, tested on an A800 GPU.
    }
    \label{fig:both}
    \vspace{-10px}
\end{figure*}

\begin{table*}[ht]
    \centering
    \scalebox{0.85}{
    
    \begin{tabular}{l|c|c|c|c|ccc}
        \toprule
         \multirow{2}{*}{\textbf{Method}}  &\multirow{2}{*}{\textbf{\# Params (\%)}}& \multirow{2}{*}{\textbf{MATH}} & \multirow{2}{*}{\textbf{MMLU}} & \multirow{2}{*}{\textbf{CQA}}& \multicolumn{3}{c}{\textbf{CODE}} \\
          &&&& & \textbf{Pass@1} & \textbf{Pass@5} & \textbf{Pass@10}   \\
        \midrule
         
         LoRA   & 1.49\%  & \textbf{38.6} & 43.8 & 58.2 & 28.8  & 34.7 & 37.1\\
         ID-LoRA (PMRC+RB) & \textbf{0.77\%  }& 38.1 & \textbf{48.5} & \textbf{65.6} & \textbf{29.4} &  \textbf{35.9} & \textbf{38.2}\\
         \midrule
         ID-LoRA (w/o RB) & 0.83\%  & \textbf{38.8} & 45.2 & 64.4 & 28.7  & 35.3 & \textbf{38.6}\\
         ID-LoRA (SS+RB) & 0.77\%  & 35.1 & 48.2 & 65.3 & 29.3  & 35.3 & 37.6\\
         ID-LoRA (RS+RB) & 0.77\%  & 36.6 & \textbf{48.5} & 65.1 & 28.9  & 34.5 & 37.4\\
        \bottomrule
    \end{tabular}}
     \caption{Ablation on LLaMA-3.2-3B across a multi-task dataset comparing different strategies. 'w/o RB' idenotes removing the RB algorithm from ID-LoRA. ‘SS’ abbreviates Serial Selection, where pivots are assigned to groups in sequential order. ’RS‘ abbreviates Random Selection, where pivots are assigned to groups at random.}
     \label{tab:abl-2-effec}
     \vspace{-10px}
\end{table*}

\subsection{Ablation Analysis}
We first conducted a parameter-parity performance comparison between LoRA and ID-LoRA. Ablations then dissect the contribution of each design component: (i) PMRC Effectiveness: Comparing clustered row selection against random and serial selection strategies. (ii )RB Advantage: Evaluating rank scalability and multi-task performance under matched trainable parameter budgets. 

\begin{figure*}[ht]
    \centering
    \includegraphics[width=0.80 \linewidth]{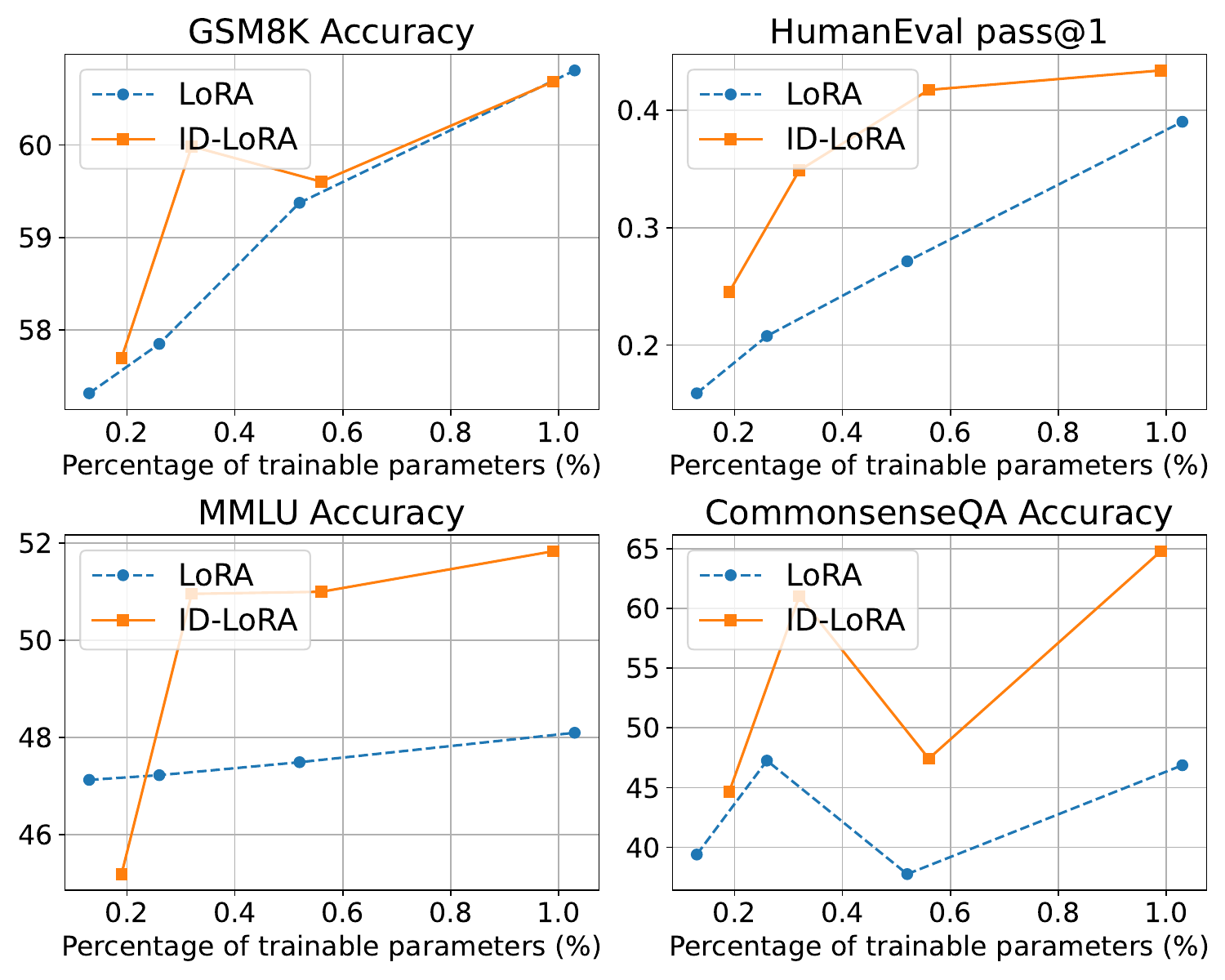}
    \caption{Performance comparison between ID-LoRA and vanilla LoRA on four representative benchmarks (GSM8K, HumanEval, MMLU, and CommonsenseQA) using LLaMA3-8B as the backbone. Both methods vary the rank while keeping the number of trainable parameters approximately equal, and results are reported under few-shot or zero-shot settings. Detailed results are provided in Appendix ~\ref{sec: Experimental Details of Parameter-Parity Performance}}
    \label{fig:placeholder}
    \vspace{-10px}
\end{figure*}

\subsubsection{Parameter-Parity Performance Analysis}
Figure~\ref{fig:placeholder} illustrates the performance of ID-LoRA versus LoRA across four benchmarks in a multi-task setting. When both methods operate under approximately equal trainable-parameter budgets, ID-LoRA consistently surpasses LoRA, and the margin grows as the parameter ratio increases, underscoring its superior parameter efficiency and robustness. These findings confirm that, under comparable parameter constraints, ID-LoRA can employ larger ranks, leading to additional gains in multi-task modeling.

\subsubsection{Experiments on PMRC and RB}
To validate PMRC’s clustering-based grouping, we compare it with two baselines on LLaMA-3.2-3B under a multi-task setup: Serial Selection (SS) takes contiguous rows and Random Selection (RS) samples rows uniformly from the frozen pretrained weight matrix. 
Table~\ref{tab:abl-2-effec} contrasts PMRC-based grouping with two baselines, Serial (SS+RB) and Random (RS+RB) row selection, on LLaMA-3.2-3B. 
With equal parameter budgets, PMRC consistently outperforms both baselines across all tasks, confirming that clustering-based initialization enables superior expert specialization and downstream gains.

Table~\ref{tab:abl-2-effec} presents a controlled ablation on the LLaMA-3.2-3B multi-task benchmark to isolate the contribution of RB (PMRC+RB vs w/o RB). To ensure parity, we configure ID-LoRA with rank $r = 128$ closely matching the parameter budget of the variant that omits RB. Across four multi-task evaluation benchmarks, the RB-enhanced variant delivers consistent gains, except on MATH. MATH tasks necessitate precise and accurate directional updates to the weight matrix; simply increasing the rank through the RB algorithm alone is insufficient to enhance performance. These results validate the intrinsic effectiveness of the RB algorithm in multi-task scenarios.



\section{Conclusion and Future Work}
In this paper, we propose ID-LoRA, a PEFT framework that integrates matrix interpolation decomposition with clustered low-rank adaptation. 
Theoretically, we prove that cluster-aware decomposition strictly improves upon global low-rank approximations when task clusters exist.
Empirically, we show that ID-LoRA not only improves performance in single-task, but also maintains strong results across multiple multi-task benchmarks while reducing trainable parameters to 46\% below those of standard LoRA. 

Although ID-LoRA achieves strong results, its gains are less pronounced on mathematical-reasoning benchmarks than on other tasks, where it remains largely on par with LoRA and its recent variants. This limitation arises because mathematical reasoning demands precise directional weight updates rather than mere rank expansion. Building on these findings, future work will focus on integrating the ID-LoRA architecture to develop even more efficient learning strategies that further enhance model performance.

\section*{statement}
\textbf{Ethics statement.} This paper proposes a novel efficient fine-tuning architecture that improves upon LoRA-based frameworks. It enhances modeling capability in multi-task scenarios while reducing the number of trainable parameters. We do not identify any potential negative concerns.

\textbf{Reproducibility statement.} This paper provides all necessary technique details for reproducibility, including theoretical analysis, algorithm details, experimental settings, and source code of the proposed techniques.

\bibliography{custom}

@InProceedings{pmlr-v97-houlsby19a,
  title = 	 {Parameter-Efficient Transfer Learning for {NLP}},
  author =       {Houlsby, Neil and Giurgiu, Andrei and Jastrzebski, Stanislaw and Morrone, Bruna and De Laroussilhe, Quentin and Gesmundo, Andrea and Attariyan, Mona and Gelly, Sylvain},
  booktitle = 	 {Proceedings of the 36th International Conference on Machine Learning},
  pages = 	 {2790--2799},
  year = 	 {2019},
  editor = 	 {Chaudhuri, Kamalika and Salakhutdinov, Ruslan},
  volume = 	 {97},
  series = 	 {Proceedings of Machine Learning Research},
  month = 	 {09--15 Jun},
  publisher =    {PMLR},
  url = 	 {https://proceedings.mlr.press/v97/houlsby19a.html}
}

@inproceedings{
hu2022lora,
title={Lo{RA}: Low-Rank Adaptation of Large Language Models},
author={Edward J Hu and yelong shen and Phillip Wallis and Zeyuan Allen-Zhu and Yuanzhi Li and Shean Wang and Lu Wang and Weizhu Chen},
booktitle={International Conference on Learning Representations},
year={2022},
url={https://openreview.net/forum?id=nZeVKeeFYf9}
}

@inproceedings{li2021prefix,
  title={Prefix-Tuning: Optimizing Continuous Prompts for Generation},
  author={Li, Xiang Lisa and Liang, Percy},
  booktitle={Proceedings of the 59th Annual Meeting of the Association for Computational Linguistics and the 11th International Joint Conference on Natural Language Processing (Volume 1: Long Papers)},
  pages={4582--4597},
  year={2021}
}

@inproceedings{liu2024dora,
  title={DoRA: weight-decomposed low-rank adaptation},
  author={Liu, Shih-Yang and Wang, Chien-Yi and Yin, Hongxu and Molchanov, Pavlo and Wang, Yu-Chiang Frank and Cheng, Kwang-Ting and Chen, Min-Hung},
  booktitle={Proceedings of the 41st International Conference on Machine Learning},
  pages={32100--32121},
  year={2024}
}

@article{huang2023lorahub,
  title={Lorahub: Efficient cross-task generalization via dynamic lora composition},
  author={Huang, Chengsong and Liu, Qian and Lin, Bill Yuchen and Pang, Tianyu and Du, Chao and Lin, Min},
  journal={arXiv preprint arXiv:2307.13269},
  year={2023}
}

@article{tian2024hydralora,
  title={Hydralora: An asymmetric lora architecture for efficient fine-tuning},
  author={Tian, Chunlin and Shi, Zhan and Guo, Zhijiang and Li, Li and Xu, Cheng-Zhong},
  journal={Advances in Neural Information Processing Systems},
  volume={37},
  pages={9565--9584},
  year={2024}
}

@inproceedings{pfeiffer-etal-2021-adapterfusion,
    title = "{A}dapter{F}usion: Non-Destructive Task Composition for Transfer Learning",
    author = {Pfeiffer, Jonas  and
      Kamath, Aishwarya  and
      R{\"u}ckl{\'e}, Andreas  and
      Cho, Kyunghyun  and
      Gurevych, Iryna},
    editor = "Merlo, Paola  and
      Tiedemann, Jorg  and
      Tsarfaty, Reut",
    booktitle = "Proceedings of the 16th Conference of the European Chapter of the Association for Computational Linguistics: Main Volume",
    month = apr,
    year = "2021",
    address = "Online",
    publisher = "Association for Computational Linguistics",
    url = "https://aclanthology.org/2021.eacl-main.39/",
    doi = "10.18653/v1/2021.eacl-main.39",
    pages = "487--503"
}

@INPROCEEDINGS{10378091,
  author={Qiao, Yanyuan and Yu, Zheng and Wu, Qi},
  booktitle={2023 IEEE/CVF International Conference on Computer Vision (ICCV)}, 
  title={VLN-PETL: Parameter-Efficient Transfer Learning for Vision-and-Language Navigation}, 
  year={2023},
  volume={},
  number={},
  pages={15397-15406},
  doi={10.1109/ICCV51070.2023.01416}}

@misc{zhang2025lori,
      title={LoRI: Reducing Cross-Task Interference in Multi-Task Low-Rank Adaptation}, 
      author={Juzheng Zhang and Jiacheng You and Ashwinee Panda and Tom Goldstein},
      year={2025},
      eprint={2504.07448},
      archivePrefix={arXiv},
      primaryClass={cs.LG},
      url={https://arxiv.org/abs/2504.07448}, 
}

@misc{he2022unifiedviewparameterefficienttransfer,
      title={Towards a Unified View of Parameter-Efficient Transfer Learning}, 
      author={Junxian He and Chunting Zhou and Xuezhe Ma and Taylor Berg-Kirkpatrick and Graham Neubig},
      year={2022},
      eprint={2110.04366},
      archivePrefix={arXiv},
      primaryClass={cs.CL},
      url={https://arxiv.org/abs/2110.04366}, 
}

@ARTICLE{6797059,
  author={Jacobs, Robert A. and Jordan, Michael I. and Nowlan, Steven J. and Hinton, Geoffrey E.},
  journal={Neural Computation}, 
  title={Adaptive Mixtures of Local Experts}, 
  year={1991},
  volume={3},
  number={1},
  pages={79-87},
  doi={10.1162/neco.1991.3.1.79}}

@article{DBLP:journals/corr/abs-2310-18339,
  author       = {Qidong Liu and
                  Xian Wu and
                  Xiangyu Zhao and
                  Yuanshao Zhu and
                  Derong Xu and
                  Feng Tian and
                  Yefeng Zheng},
  title        = {MOELoRA: An MOE-based Parameter Efficient Fine-Tuning Method for Multi-task
                  Medical Applications},
  journal      = {CoRR},
  volume       = {abs/2310.18339},
  year         = {2023},
  url          = {https://doi.org/10.48550/arXiv.2310.18339},
  doi          = {10.48550/ARXIV.2310.18339},
  eprinttype    = {arXiv},
  eprint       = {2310.18339},
  bibsource    = {dblp computer science bibliography, https://dblp.org}
}

@article{DBLP:journals/corr/abs-2407-21783,
  author       = {Abhimanyu Dubey and
                  Abhinav Jauhri and
                  Abhinav Pandey and
                  Abhishek Kadian and
                  Ahmad Al{-}Dahle and
                  Aiesha Letman and
                  Akhil Mathur and
                  Alan Schelten and
                  Amy Yang and
                  Angela Fan and
                  Anirudh Goyal and
                  Anthony Hartshorn and
                  Aobo Yang and
                  Archi Mitra and
                  Archie Sravankumar and
                  Artem Korenev and
                  Arthur Hinsvark and
                  Arun Rao and
                  Aston Zhang and
                  Aur{\'{e}}lien Rodriguez and
                  Austen Gregerson and
                  Ava Spataru and
                  Baptiste Rozi{\`{e}}re and
                  Bethany Biron and
                  Binh Tang and
                  Bobbie Chern and
                  Charlotte Caucheteux and
                  Chaya Nayak and
                  Chloe Bi and
                  Chris Marra and
                  Chris McConnell and
                  Christian Keller and
                  Christophe Touret and
                  Chunyang Wu and
                  Corinne Wong and
                  Cristian Canton Ferrer and
                  Cyrus Nikolaidis and
                  Damien Allonsius and
                  Daniel Song and
                  Danielle Pintz and
                  Danny Livshits and
                  David Esiobu and
                  Dhruv Choudhary and
                  Dhruv Mahajan and
                  Diego Garcia{-}Olano and
                  Diego Perino and
                  Dieuwke Hupkes and
                  Egor Lakomkin and
                  Ehab AlBadawy and
                  Elina Lobanova and
                  Emily Dinan and
                  Eric Michael Smith and
                  Filip Radenovic and
                  Frank Zhang and
                  Gabriel Synnaeve and
                  Gabrielle Lee and
                  Georgia Lewis Anderson and
                  Graeme Nail and
                  Gr{\'{e}}goire Mialon and
                  Guan Pang and
                  Guillem Cucurell and
                  Hailey Nguyen and
                  Hannah Korevaar and
                  Hu Xu and
                  Hugo Touvron and
                  Iliyan Zarov and
                  Imanol Arrieta Ibarra and
                  Isabel M. Kloumann and
                  Ishan Misra and
                  Ivan Evtimov and
                  Jade Copet and
                  Jaewon Lee and
                  Jan Geffert and
                  Jana Vranes and
                  Jason Park and
                  Jay Mahadeokar and
                  Jeet Shah and
                  Jelmer van der Linde and
                  Jennifer Billock and
                  Jenny Hong and
                  Jenya Lee and
                  Jeremy Fu and
                  Jianfeng Chi and
                  Jianyu Huang and
                  Jiawen Liu and
                  Jie Wang and
                  Jiecao Yu and
                  Joanna Bitton and
                  Joe Spisak and
                  Jongsoo Park and
                  Joseph Rocca and
                  Joshua Johnstun and
                  Joshua Saxe and
                  Junteng Jia and
                  Kalyan Vasuden Alwala and
                  Kartikeya Upasani and
                  Kate Plawiak and
                  Ke Li and
                  Kenneth Heafield and
                  Kevin Stone and
                  et al.},
  title        = {The Llama 3 Herd of Models},
  journal      = {CoRR},
  volume       = {abs/2407.21783},
  year         = {2024},
  url          = {https://doi.org/10.48550/arXiv.2407.21783},
  doi          = {10.48550/ARXIV.2407.21783},
  eprinttype    = {arXiv},
  eprint       = {2407.21783},
  bibsource    = {dblp computer science bibliography, https://dblp.org}
}

@misc{yang2024qwen2technicalreport,
      title={Qwen2 Technical Report}, 
      author={An Yang and Baosong Yang and Binyuan Hui and Bo Zheng and Bowen Yu and Chang Zhou and Chengpeng Li and Chengyuan Li and Dayiheng Liu and Fei Huang and Guanting Dong and Haoran Wei and Huan Lin and Jialong Tang and Jialin Wang and Jian Yang},
      year={2024},
      eprint={2407.10671},
      archivePrefix={arXiv},
      primaryClass={cs.CL},
      url={https://arxiv.org/abs/2407.10671}, 
}

@article{zhang2023lora,
  title={LORA-FA: Memory-Efficient Low-Rank Adaptation for Large Language Models Fine-Tuning},
  author={Zhang, Longteng and Zhang, Lin and Shi, Shaohuai and Chu, Xiaowen and Li, Bo},
  journal={arXiv preprint arXiv:2308.03303},
  year={2023},
  note={Version: 2023b}
}

@book{horn2012matrix,
  title     = {Matrix Analysis},
  author    = {Horn, Roger A. and Johnson, Charles R.},
  publisher = {Cambridge University Press},
  year      = {2012},
  edition   = {2nd},
  address   = {Cambridge},
  isbn      = {978-0521839402},
  note      = {Includes over 1,100 problems and new sections on SVD, CS decomposition, and Weyr canonical form}
}

@inproceedings{10.5555/3495724.3495883,
author = {Brown, Tom B. and Mann, Benjamin and Ryder, Nick and Subbiah, Melanie and Kaplan, Jared and Dhariwal, Prafulla and Neelakantan, Arvind and Shyam, Pranav and Sastry, Girish and Askell, Amanda and Agarwal, Sandhini and Herbert-Voss, Ariel and Krueger, Gretchen and Henighan, Tom and Child, Rewon and Ramesh, Aditya and Ziegler, Daniel M. and Wu, Jeffrey and Winter, Clemens and Hesse, Christopher and Chen, Mark and Sigler, Eric and Litwin, Mateusz and Gray, Scott and Chess, Benjamin and Clark, Jack and Berner, Christopher and McCandlish, Sam and Radford, Alec and Sutskever, Ilya and Amodei, Dario},
title = {Language models are few-shot learners},
year = {2020},
isbn = {9781713829546},
publisher = {Curran Associates Inc.},
address = {Red Hook, NY, USA},
booktitle = {Proceedings of the 34th International Conference on Neural Information Processing Systems},
articleno = {159},
numpages = {25},
location = {Vancouver, BC, Canada},
series = {NIPS '20}
}

@inproceedings{fang2024large,
  author    = {Meng Fang and Shilong Deng and Yudi Zhang and Zijing Shi and Ling Chen and Mykola Pechenizkiy and Jun Wang},
  title     = {Large Language Models Are Neurosymbolic Reasoners},
  booktitle = {Proceedings of the AAAI Conference on Artificial Intelligence},
  volume    = {38},
  number    = {16},
  pages     = {17985--17993},
  year      = {2024},
  doi       = {10.1609/aaai.v38i16.29754},
  url       = {https://doi.org/10.1609/aaai.v38i16.29754}
}

@article{ding2023parameter,
  author    = {Ning Ding and Yujia Qin and Guang Yang and Fuchao Wei and Zonghan Yang and Yusheng Su and Shengding Hu and Yulin Chen and Chi-Min Chan and Weize Chen and Jing Yi and Weilin Zhao and Xiaozhi Wang and Zhiyuan Liu and Hai-Tao Zheng and Jianfei Chen and Yang Liu and Jie Tang and Juanzi Li and Maosong Sun},
  title     = {Parameter-efficient fine-tuning of large-scale pre-trained language models},
  journal   = {Nature Machine Intelligence},
  volume    = {5},
  pages     = {220--235},
  year      = {2023},
  doi       = {10.1038/s42256-023-00626-4},
  url       = {https://doi.org/10.1038/s42256-023-00626-4}
}

@inproceedings{aghajanyan-etal-2021-intrinsic,
    title = "Intrinsic Dimensionality Explains the Effectiveness of Language Model Fine-Tuning",
    author = "Aghajanyan, Armen  and
      Gupta, Sonal  and
      Zettlemoyer, Luke",
    editor = "Zong, Chengqing  and
      Xia, Fei  and
      Li, Wenjie  and
      Navigli, Roberto",
    booktitle = "Proceedings of the 59th Annual Meeting of the Association for Computational Linguistics and the 11th International Joint Conference on Natural Language Processing (Volume 1: Long Papers)",
    month = aug,
    year = "2021",
    address = "Online",
    publisher = "Association for Computational Linguistics",
    url = "https://aclanthology.org/2021.acl-long.568/",
    doi = "10.18653/v1/2021.acl-long.568",
    pages = "7319--7328"
}

@software{Levy-Kramer_k-means-constrained_2018,
  author = {Levy-Kramer, Josh},
  month = apr,
  title = {{k-means-constrained}},
  url = {https://github.com/joshlk/k-means-constrained},
  year = {2018}
}

@article{DBLP:journals/corr/abs-2110-14168,
  author       = {Karl Cobbe and
                  Vineet Kosaraju and
                  Mohammad Bavarian and
                  Mark Chen and
                  Heewoo Jun and
                  Lukasz Kaiser and
                  Matthias Plappert and
                  Jerry Tworek and
                  Jacob Hilton and
                  Reiichiro Nakano and
                  Christopher Hesse and
                  John Schulman},
  title        = {Training Verifiers to Solve Math Word Problems},
  journal      = {CoRR},
  volume       = {abs/2110.14168},
  year         = {2021},
  url          = {https://arxiv.org/abs/2110.14168},
  eprinttype    = {arXiv},
  eprint       = {2110.14168},
  bibsource    = {dblp computer science bibliography, https://dblp.org}
}

@misc{alpaca,
  author = {Rohan Taori and Ishaan Gulrajani and Tianyi Zhang and Yann Dubois and Xuechen Li and Carlos Guestrin and Percy Liang and Tatsunori B. Hashimoto },
  title = {Stanford Alpaca: An Instruction-following LLaMA model},
  year = {2023},
  publisher = {GitHub},
  journal = {GitHub repository},
  howpublished = {\url{https://github.com/tatsu-lab/stanford_alpaca}},
}

@misc{chaudhary2023codealpaca,
  title={Code Alpaca: An Instruction-Following LLaMA Model for Code Generation},
  author={Chaudhary, Sahil},
  year={2023},
  howpublished={GitHub repository},
  url={https://github.com/sahil280114/codealpaca},
  note={Accessed: 2025-07-04}
}

@article{chen2021evaluating,
  title={Evaluating Large Language Models Trained on Code},
  author={Chen, Mark and Tworek, Jerry and Jun, Heewoo and Yuan, Qiming and de Oliveira Pinto, Henrique Ponde and Kaplan, Jared and Edwards, Harri and Burda, Yuri and Joseph, Nicholas and Brockman, Greg and others},
  journal={arXiv preprint arXiv:2107.03374},
  year={2021}
}

@article{bianchi2023safetytuned,
  title={Safety-Tuned LLaMAs: Lessons from Improving the Safety of Large Language Models that Follow Instructions},
  author={Bianchi, Federico and Suzgun, Mirac and Attanasio, Giuseppe and R{\"o}ttger, Paul and Jurafsky, Dan and Hashimoto, Tatsunori and Zou, James},
  journal={arXiv preprint arXiv:2309.07875},
  year={2023},
  month={September}
}

@misc{taori2023stanford,
  title={Stanford Alpaca: An Instruction-Following LLaMA Model},
  author={Taori, Rohan and Gulrajani, Ishaan and Zhang, Tianyi and Dubois, Yann and Li, Xuechen and Guestrin, Carlos and Liang, Percy and Hashimoto, Tatsunori B.},
  year={2023},
  howpublished={GitHub repository},
  url={https://github.com/tatsu-lab/stanford_alpaca},
  note={Accessed: 2025-07-04}
}

@article{qi2023finetuning,
  title={Fine-Tuning Aligned Language Models Compromises Safety, Even When Users Do Not Intend To!},
  author={Qi, Xiangyu and Zeng, Yi and Xie, Tinghao and Chen, Pin-Yu and Jia, Ruoxi and Mittal, Prateek and Henderson, Peter},
  journal={arXiv preprint arXiv:2310.03693},
  year={2023},
  month={October},
  note={This paper contains red-teaming data and model-generated content that can be offensive in nature}
}

@inproceedings{
MMLU,
title={Measuring Massive Multitask Language Understanding},
author={Dan Hendrycks and Collin Burns and Steven Basart and Andy Zou and Mantas Mazeika and Dawn Song and Jacob Steinhardt},
booktitle={International Conference on Learning Representations},
year={2021}
}

@inproceedings{commonsenseqa,
  title={CommonsenseQA: A Question Answering Challenge Targeting Commonsense Knowledge},
  author={Talmor, Alon and Herzig, Jonathan and Lourie, Nicholas and Berant, Jonathan},
  booktitle={Proceedings of the 2019 Conference of the North American Chapter of the Association for Computational Linguistics: Human Language Technologies, Volume 1 (Long and Short Papers)},
  pages={4149--4158},
  year={2019}
}

@article{grattafiori2024llama3,
  title={The llama 3 herd of models},
  author={Grattafiori, Aaron and Dubey, Abhimanyu and Jauhri, Abhinav and Pandey, Abhinav and Kadian, Abhishek and Al-Dahle, Ahmad and Letman, Aiesha and Mathur, Akhil and Schelten, Alan and Vaughan, Alex and others},
  journal={arXiv preprint arXiv:2407.21783},
  year={2024}
}

@article{jiang2023mistral,
  title={Mistral 7B},
  author={Jiang, Albert Q and Sablayrolles, Alexandre and Mensch, Arthur and Bamford, Chris and Chaplot, Devendra Singh and Casas, Diego de las and Bressand, Florian and Lengyel, Gianna and Lample, Guillaume and Saulnier, Lucile and others},
  journal={arXiv preprint arXiv:2310.06825},
  year={2023}
}

@article{53e9a92ab7602d9703279ea4,	
author={Kenneth L. Ho and Leslie Greengard},	
pages={A2507-A2532},	
title={A Fast Direct Solver for Structured Linear Systems by Recursive Skeletonization},	
volume=34,	
year=2012,}

@article{1956On,
  title={On best approximate solutions of linear matrix equations},
  author={ Penrose, R.  and  Todd, J. A. },
  journal={Mathematical Proceedings of the Cambridge Philosophical Society},
  volume={52},
  number={01},
  pages={17},
  year={1956},
}

@misc{llama32tech2024,
  title   = {The Llama 3.2 Herd of Models},
  author  = {Meta AI},
  howpublished = {\url{https://huggingface.co/meta-llama/Llama-3.2-3B}},
  year    = {2024},
  note    = {Llama-3.2-3B variant used in our experiments}
}

@inproceedings{
kopiczko2024vera,
title={Ve{RA}: Vector-based Random Matrix Adaptation},
author={Dawid Jan Kopiczko and Tijmen Blankevoort and Yuki M Asano},
booktitle={The Twelfth International Conference on Learning Representations},
year={2024},
url={https://openreview.net/forum?id=NjNfLdxr3A}
}

\newpage

\appendix

\section{Theoretical Framework of ID-LoRA} \label{sec: Theoretical Framework of ID-LoRA}
To preserve theoretical completeness, we first restate the three assumptions from the paper, then, derive Theorems 1 and 2 via the detailed definitions that follow, providing full proofs in this section.

\subsection{Problem Setup and Assumptions}
\begin{assumption}[Task Clustering Structure] Let $\{W^*_i\}_{i=1}^m \subset \mathbb{R}^{d_1 \times d_2}$ denote the optimal parameter matrices for $m$ related tasks. We assume these matrices can be partitioned into $k$ disjoint clusters $\mathcal{C}_1, \mathcal{C}_2 \ldots, \mathcal{C}_k$ (with $k<m$) 
such that:
\begin{equation}
   \forall i,j \in \mathcal{C}_l, \quad \text{Rank}(W^*_i - W^*_j) \leq r_l < \min(d_1,d_2) 
\end{equation}
where $\mathcal{C}_l$ denotes the $l$-th cluster, and $r_l$ is the intrinsic rank difference within the cluster.
\end{assumption} 
\begin{assumption}[Task Parameter Sharing] We assume the existence of a shared feature matrix $B \in \mathbb{R}^{d_2 \times r_g}$ with $r_g < d_2$, such that every optimal task-specific parameter matrix can be written as:
\begin{equation}
    W^*_i = BA_i  + E_i, \quad \|E_i\|_F^2 \leq \epsilon
\end{equation}
where $A_i \in \mathbb{R}^{r_g\times d_1}$ is a task-specific matrix and $E_i$ is the reconstruction error.
\end{assumption}
\begin{definition} [Pivot Sensitivity]
The \textup{CUR} decomposition's approximation accuracy crucially depends on pivot selection, with reconstruction errors escalating sharply when the pivots inadequately capture the principal row or column subspaces,
\begin{equation}
    \max_{\text{bad pivots}} \|W - \tilde{W}_{\textup{CUR}}\|_F^2 > \min_{\text{good pivots}} \|W - \tilde{W}_{\textup{CUR}}\|_F^2
\end{equation}
\end{definition}

\subsection{Core Definitions}

\begin{definition}[Task Parameter Distance] For any pair of tasks $i,j\in [m]$, their distance is defined as the squared Frobenius norm of the difference between their optimal parameter matrices: 
\begin{equation}
    d(i,j)=\|W^*_i-W^*_j\|_F^2
\end{equation}
\end{definition}

\begin{definition}[Cluster Low-Rank Decomposition]
Let the optimal parameter matrices of the $m$ tasks be $\{W^*_i\}_{i=1}^{m} \in \mathbb{R}^{d_1 \times d_2}$.
\begin{enumerate}
\item[\textit{(1)}] Perform $k$-means on $\{W^*_i\}_{i=1}^{m}$ to obtain disjoint clusters $\{\mathcal{C}_l\}_{l=1}^{k}$.
\item[\textit{(2)}] For each cluster $\mathcal{C}_l$, compute the centroid matrix 
$M_l=\frac{1}{|\mathcal{C}_l|}\sum_{i\in\mathcal{C}_l}W^*_i$ and extract its rank $r_l$ truncated SVD, yielding the cluster-specific matrix $A_l\in \mathbb{R}^{d_1 \times r_l}$.
\item[\textit{(3)}] Learn the shared matrix $B \in  \mathbb{R}^{d_2\times r}$ by solving
\begin{equation}
   \min_{B}\sum_{i=1}^{m}\|W^*_i-BA_{l(i)}\|_F^2, 
\end{equation}
where $l(i)$ denotes the cluster of task $i$.
\end{enumerate}
\end{definition}
\begin{definition}[Multi-Task Reconstruction] The parameter matrix for task $i$ is reconstructed as
\begin{equation}
    \tilde{W}_i=BA_{l(i)}
\end{equation}
\end{definition}
\begin{definition}
    [CUR Decomposition]
Given a matrix \( W \in \mathbb{R}^{d \times d} \), a \textup{CUR} decomposition approximates $W$ as a product of three matrices:
\begin{equation}
W \approx C U R
\end{equation}
where $ C \in \mathbb{R}^{d \times c}$, and is a subset of \( c \) columns from $W$. $R \in \mathbb{R}^{r \times n}$ and is a subset of $r$ rows from $W$. 
$U \in \mathbb{R}^{c \times r}$, which is a weight matrix constructed to minimize the approximation error $\| A - C U R \|_{F} $.
\end{definition}

\subsection{Main Theorems}

\begin{theorem}
[Clustering Reconstruction] Under Assumptions 1 and 2, the clustering-aware decomposition achieves a tighter reconstruction error bound compared to global low rank decomposition:
\begin{equation}
    \sum_{i=1}^m \|W^*_i - \tilde{W}_i\|_F^2 \leq \sum_{i=1}^m \|W^*_i - \tilde{W}^{\text{global}}_i\|_F^2 - \Delta
\end{equation}
where $\Delta = \sum_{l=1}^k \sum_{i \in \mathcal{C}_l} \|B(A_{l(i)} - A^{\text{global}}) \|_F^2 \geq 0$. The inequality becomes strict ($\Delta > 0$) when tasks exhibit clustering structure.
\end{theorem}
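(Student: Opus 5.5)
The plan is to reduce the statement to a Pythagorean (bias--variance) identity for least squares, applied cluster by cluster with the shared matrix $B$ held fixed.

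\textbf{Normalizing the two estimators.} I fix the shared matrix $B$ from Assumption 2 and use it in both decompositions, so the only difference between $\tilde W_i=BA_{l(i)}$ and $\tilde W^{\text{global}}_i=BA^{\text{global}}$ is the coefficient matrix. For the global decomposition I take $A^{\text{global}}$ to be any single coefficient matrix shared by all tasks; the natural choice is the least-squares one, $A^{\text{global}}=B^{+}\bar M$ with $\bar M=\frac1m\sum_i W^*_i$. For cluster $\mathcal C_l$ I take $A_l$ to be the minimizer of $\sum_{i\in\mathcal C_l}\|W^*_i-BA\|_F^2$ over $A$. Setting the gradient to zero shows this minimizer depends only on the centroid $M_l$ of Definition (Cluster Low-Rank Decomposition):
\[
BA_l=P_BM_l,\qquad P_B=BB^{+}.
\]
The definition builds $A_l$ from a truncated SVD of $M_l$, so I must reconcile the two constructions. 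Since step (3) of that definition optimizes $B$ against the fixed $A_l$, I would argue that at a stationary point of this alternating scheme the least-squares property above holds. Failing that, I would note that replacing the SVD factor by the least-squares coefficient can only lower the left-hand side, so it is enough to prove the bound for the least-squares $A_l$.

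\textbf{Per-cluster identity.} For $i\in\mathcal C_l$ I expand
\[
W^*_i-BA^{\text{global}}=(W^*_i-BA_l)+B(A_l-A^{\text{global}})
\]
and square the Frobenius norm. Summed over $i\in\mathcal C_l$, the cross term is
\[
2\Big\langle \sum_{i\in\mathcal C_l}(W^*_i-BA_l),\,B(A_l-A^{\text{global}})\Big\rangle
=2|\mathcal C_l|\big\langle (I-P_B)M_l,\,B(A_l-A^{\text{global}})\big\rangle .
\]
This vanishes because $(I-P_B)M_l$ is orthogonal to $\operatorname{col}(B)$. Hence
\[
\sum_{i\in\mathcal C_l}\|W^*_i-\tilde W^{\text{global}}_i\|_F^2=\sum_{i\in\mathcal C_l}\|W^*_i-\tilde W_i\|_F^2+\sum_{i\in\mathcal C_l}\|B(A_l-A^{\text{global}})\|_F^2 .
\]
Summing over $l=1,\dots,k$ gives the claimed inequality, in fact with equality, and with exactly the $\Delta$ of the statement. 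Since $\Delta$ is a sum of squared norms, $\Delta\ge 0$ is immediate.

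\textbf{Strictness.} From the identity, $\Delta>0$ if and only if $P_BM_l\neq P_B\bar M$ for some $l$, that is, the projected cluster centroids are not all equal. I would use Assumption 1 to argue that genuinely distinct clusters have distinct centroids $M_l$. I would use Assumption 2 ($W^*_i=BA_i+E_i$ with $\|E_i\|_F^2\le\epsilon$) to show the centroid differences lie, up to $O(\sqrt\epsilon)$, in $\operatorname{col}(B)$. Together these give $\|P_B(M_l-\bar M)\|_F\ge\|M_l-\bar M\|_F-O(\sqrt\epsilon)>0$ whenever the inter-cluster separation exceeds the noise level, which is how I would formalize ``tasks exhibit clustering structure.''

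\textbf{Main obstacle.} I expect the main difficulty to be the first step rather than the algebra. The theorem as written leaves $A^{\text{global}}$ and the SVD-based $A_l$ loosely specified, with dimensions inconsistent with $B\in\mathbb R^{d_2\times r}$. The cross-term cancellation needs exactly the least-squares optimality of $BA_l$ within each cluster. I would therefore state that normalization explicitly, and transpose $B$ where necessary so that the product $BA$ is well defined, before running the identity.
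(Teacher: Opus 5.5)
Your core argument is correct, and it takes a different and considerably sharper route than the paper. The paper stays in coefficient space. Using Assumption 2 it writes the two errors as $\|B(A_i-A^{\text{global}})+E_i\|_F^2$ and $\|B(A_i-A_{l(i)})+E_i\|_F^2$. It then invokes that $A_{l(i)}$ is the cluster centroid (hence closer to the $A_i$ than $A^{\text{global}}$) and that $B$ is optimized. From this it concludes only $\Delta\ge 0$; it never actually derives the $-\Delta$ term.

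If you make that reading precise by taking $A_l$ to be the mean of the $A_i$ over $\mathcal{C}_l$, your expansion leaves a cross term $2\sum_l|\mathcal{C}_l|\langle \bar E_l,B(A_l-A^{\text{global}})\rangle$, where $\bar E_l$ is the cluster mean of the $E_i$. So the paper's route yields the bound only up to a slack of order $\sqrt{\epsilon m\Delta}$. Your normalization $BA_l=P_BM_l$ is exactly what removes this term, because the residual $(I-P_B)M_l$ is orthogonal to $\operatorname{col}(B)$. You get an exact within/between-cluster identity, equality for every choice of $A^{\text{global}}$, and you need Assumption 2 only for strictness. The price is that you prove the statement for per-cluster least-squares coefficients, not for the SVD-based $A_l$ of the Cluster Low-Rank Decomposition definition.

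That restriction is essential, and your fallback for removing it is invalid. The claim that replacing the SVD factor by the least-squares coefficient only lowers the left-hand side, so that it suffices to treat the least-squares $A_l$, runs the wrong way. A bound on the smaller left-hand side says nothing about the larger one, and $\Delta$ also changes with $A_l$.

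Run your expansion with an arbitrary $A_l$ (same fixed $B$) and write $x_l=P_BM_l-BA_l$, $y_l=B(A_l-A^{\text{global}})$. The claimed inequality is then equivalent to $\sum_l|\mathcal{C}_l|\langle x_l,y_l\rangle\ge 0$. This fails, for instance, when every $P_BM_l$ equals $BA^{\text{global}}$ (so $y_l=-x_l$) and some $BA_l\neq P_BM_l$. Nothing in the SVD construction guarantees the sign condition.

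Your other reconciliation, stationarity of an alternating scheme, does give $B^\top(M_l-BA_l)=0$ and hence $BA_l=P_BM_l$. But that holds only if the $A_l$ are themselves re-optimized, whereas the definition fixes them by SVD and fits $B$ once. So drop the fallback and state the least-squares choice of $A_l$ (for the $B$ in use) as a hypothesis, as your last paragraph already suggests.

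On strictness, Assumption 1 constrains only within-cluster differences and cannot give you distinct centroids. You must take separation $\|M_l-\bar M\|_F>2\sqrt{\epsilon}$ for some $l$ as the meaning of ``tasks exhibit clustering structure''. After that, $\|(I-P_B)(M_l-\bar M)\|_F\le 2\sqrt{\epsilon}$ finishes your argument.
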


\textbf{Proof.} ~Let $\tilde{W}_i^{\text{global}} = BA^{\text{global}}$ denote the reconstruction obtained from the globally shared low-rank factorization.  
By \textbf{Assumption~2}, we have 
\begin{equation}
    \bigl\lVert W^{*}_i - \tilde{W}_i^{\text{global}} \bigr\rVert_F^{2}
= \bigl\lVert B(A_i - A^{\text{global}}) + E_i \bigr\rVert_F^{2}.
\end{equation}

In addition, the cluster-aware reconstruction (i.e., \textbf{Definition~2}) satisfies
\begin{equation}
\bigl\lVert W^{*}_i - \tilde{W}_i \bigr\rVert_F^{2}
= \bigl\lVert B(A_i - A_{l(i)})  + E_i \bigr\rVert_F^{2}.
\end{equation}

Since $A_{l(i)}$ is the cluster centroid, we have
\begin{equation}
\lVert A_i - A_{l(i)} \rVert_F^{2} \leq \lVert A_i - A^{\text{global}} \rVert_F^{2},
\end{equation}
and $B$ is optimized to minimize the within-cluster approximation error. 
Consequently, $\Delta \geq 0$.  
Whenever the inter-cluster task discrepancy is significant, $\Delta > 0$, demonstrating the superiority of the cluster-aware decomposition. 

\begin{theorem}[Cluster-Pivot Stability]
Under Definition 1, the clustering-based local pivot selection ensures a tighter expected error bound compared to global pivot selection:
\begin{equation}
\small
\mathbb{E}_{\mathcal{P}}\left[\max_{1 \leq i \leq m} \|W_i^* - \tilde{W}_i^{\textup{CUR}}\|_F^2\right] \geq \mathbb{E}_{\mathcal{P}_l}\left[\max_{1 \leq i \leq m} \|W_i^* - \tilde{W}_i\|_F^2\right],
\end{equation}
where $\tilde{W}_i^{\textup{CUR}}$ is the reconstruction parameter using \textit{global} pivots on all tasks, $\tilde{W}_i$ is the reconstruction parameter using \textit{cluster-local} pivots (per-cluster \textup{CUR} on $\mathcal{C}_l$), and $\mathcal{P}_l$ is the set of pivots selected from cluster $\mathcal{C}_l$'s centroid matrix.
\end{theorem}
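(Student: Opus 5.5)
The plan is to reduce Theorem 2 to a pointwise (per-cluster) comparison and then pass to expectations and maxima by monotonicity. First fix the global pivot set $\mathcal{P}$ drawn from the pooled collection $\{W^*_i\}_{i=1}^m$. For each task $i$, write $\tilde{W}_i^{\textup{CUR}}=C_{\mathcal{P}}U R_{\mathcal{P}}$ with $U$ chosen, as in Definition 5, to minimize the Frobenius error. Similarly, write $\tilde{W}_i=C_{\mathcal{P}_l}U_lR_{\mathcal{P}_l}$ for $i\in\mathcal{C}_l$.

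The key structural step is to observe that the cluster-local family of admissible reconstructions contains the global one once restricted to a cluster. Within $\mathcal{C}_l$, Assumption 1 gives $\text{Rank}(W^*_i-W^*_j)\le r_l$. Hence every $W^*_i$ with $i\in\mathcal{C}_l$ lies within a rank-$r_l$ perturbation of the centroid $M_l$ from Definition 3. Pivots chosen from $M_l$ therefore capture the principal row and column subspaces of the cluster, which makes them ``good pivots'' in the sense of Definition 1.

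Global pivots, by contrast, are selected to serve all $k$ clusters at once. With the same pivot budget, at least one cluster receives pivots that do not span its principal subspace. I would formalize this as follows.
\begin{equation}
\min_{U_l}\|W^*_i-C_{\mathcal{P}_l}U_lR_{\mathcal{P}_l}\|_F^2\le \min_{U}\|W^*_i-C_{\mathcal{P}}UR_{\mathcal{P}}\|_F^2 ,\qquad i\in\mathcal{C}_l .
\end{equation}
To justify this, I would argue that $\mathcal{P}_l$ is the optimal (``good'') pivot set for the cluster. The global $\mathcal{P}$ restricted to that cluster is then one admissible, and possibly ``bad'', choice. Definition 1 bounds the good-pivot minimum above by the error of any other pivot choice.

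Next, take the maximum over $i$. Since the inequality holds for every $i$, we get $\max_i\|W_i^*-\tilde{W}_i\|_F^2\le\max_i\|W_i^*-\tilde{W}_i^{\textup{CUR}}\|_F^2$ for each realization of the pivot selection. Taking expectations over $\mathcal{P}_l$ and $\mathcal{P}$ then preserves the inequality by monotonicity of expectation. This works provided the local error is bounded uniformly in the randomness of $\mathcal{P}_l$, for instance by its worst case, so that the two expectations can be compared.

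The main obstacle is the pointwise inequality displayed above. In general, an arbitrary global pivot set is not contained in the span of local pivots, so cluster-local optimality does not automatically dominate. My first attempt would be an explicit modeling choice: take the local pivot sets to be optimal for the centroid, with the same budget as the global set. Then use Assumption 1 to bound the residual by the within-cluster rank-$r_l$ deviation.

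If that fails, I would fall back on a weaker bound. Definition 1 gives $\mathbb{E}_{\mathcal{P}}[\max_i\cdot]\ge$ the good-pivot minimum, and clustering attains this minimum on each cluster. The result would then hold only up to the within-cluster error term, so the strict improvement would be stated under the clustering-structure hypothesis.
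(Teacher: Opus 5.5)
\paragraph{Where the proof breaks.} The step that fails is the pointwise domination $\min_{U_l}\|W_i^*-C_{\mathcal{P}_l}U_lR_{\mathcal{P}_l}\|_F^2\le\min_U\|W_i^*-C_{\mathcal{P}}UR_{\mathcal{P}}\|_F^2$. Your proposed repair (local pivots optimal for the centroid, same budget) does not rescue it, because optimality for $M_l$ says nothing about the individual $W_i^*$. Assumption~1 bounds only the \emph{rank} of within-cluster differences, not their size or their row and column spaces. So $W_i^*-M_l$ can dominate $M_l$ and lie entirely outside what the pivots of $M_l$ capture. (Assumption~1 also only gives $\text{Rank}(W_i^*-M_l)\le(|\mathcal{C}_l|-1)r_l$, not $r_l$.) Definition~1 does not bridge this. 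For a single matrix it says only that the worst bad pivots are worse than the best good ones. It does not say that a good set beats every other set, nor that pivots good for $M_l$ are good for $W_i^*$.

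Concretely, with $E_{jk}=e_je_k^{\top}$, take $d_1=d_2=2$, $m=2$, $k=1$, $W_1^*=aE_{11}+bE_{22}$ and $W_2^*=aE_{11}-bE_{22}$, so $\text{Rank}(W_1^*-W_2^*)=1$. Use one-row/one-column CUR with optimal $U$. The centroid $M_1=aE_{11}$ singles out the pivot $(1,1)$, which leaves error $b^2$ on both tasks. The global pivot $(2,2)$ leaves only $a^2$. A uniformly random global pivot has expected error $\tfrac34(a^2+b^2)<b^2$ whenever $b^2>3a^2$. So your displayed inequality is false.

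There is a second problem with the expectation step. The two expectations are over unrelated distributions, so your realization-wise step really needs $\sup_{\mathcal{P}_l}$ of the local error to lie below $\inf_{\mathcal{P}}$ of the global error, which fails even more badly here. In fact the statement does not follow from Assumption~1 and Definition~1 at all.

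\paragraph{How the paper argues instead.} The paper does not attempt domination; it bounds the two sides separately.
\begin{itemize}
\item Global lower bound: $\mathbb{E}_{\mathcal{P}}[\max_i\|W_i^*-\tilde W_i^{\textup{CUR}}\|_F]\ge\gamma_{l^*}\eta_{l^*}$. This comes from the positive probability $\eta_{l^*}$ that uniformly sampled pivots miss a cluster-specific direction $\mathbf{v}_{l^*}$, at cost at least $\gamma_{l^*}$.
\item Local upper bound: $\delta_l+\mathcal{O}(\epsilon_l)$, with $\delta_l=\max_{i\in\mathcal{C}_l}\|W_i^*-M_l\|_F$. This comes from the triangle inequality through the centroid, $\|W_i^*-\tilde W_i\|_F\le\|M_l-\tilde M_l\|_F+\|W_i^*-M_l\|_F$. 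The first term vanishes on the event, of probability at least $1-\epsilon_l$, that the local pivots capture $M_l$.
\end{itemize}
It then concludes from $\gamma_{l^*}\eta_{l^*}>\delta_l+\epsilon_l$. The paper attributes this to Definition~1, but it is really an extra separation hypothesis. The example above satisfies Assumption~1 and Definition~1 and violates it, since there $\delta_1=b$ exceeds the expected global error.

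Your fallback (``holds only up to the within-cluster error term'') is essentially the paper's local bound. What it lacks is the global miss-probability lower bound, plus an explicit condition that the intra-cluster spread is smaller than the expected penalty global pivots pay for missing a cluster-specific direction. Some hypothesis of that kind has to be assumed outright. Neither your pointwise route nor the paper's argument can extract it from Definition~1.
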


\textbf{Proof.} Step 1: Error Decomposition and Notation. \par 
(1) \textit{Global Approximation Error}: For task $i$, the global CUR approximation is: 
\begin{equation}
   \tilde{W}_i^{\textup{CUR}} = C_{\mathcal{P}} U_{\mathcal{P}} R_{\mathcal{P}}, \quad \mathcal{P} \subseteq [d_1] \times [d_2], |\mathcal{P}| = s
\end{equation}
where pivot set $\mathcal{P}$ is uniformly sampled from all rows or columns. \par 
(2) \textit{Local Approximation Error}: For task $i$ in cluster $\mathcal{C}_l$:
\begin{equation}
   \tilde{W}_i = C_{\mathcal{P}_l} U_{\mathcal{P}_l} R_{\mathcal{P}_l},  \mathcal{P}_l \sim \text{cluster-specific distribution}
\end{equation}
with pivots \(\mathcal{P}_l\) selected from cluster-specific rows or columns.

Step 2: Lower Bound for Global Error. \par
By Definition 1, there exists at least one cluster $\mathcal{C}_{l^*}$.\par
(1) Rare Feature Non-Ignorability:
\begin{equation}
   \mathbb{P}_{\mathcal{P}}(\mathbf{v}_{l^*} \notin \text{span}(C_{\mathcal{P}})) \geq \eta_{l^*} > 0
\end{equation}
where \(\mathbf{v}_{l^*}\) is the unique feature direction of \(\mathcal{C}_l\), \(\|\mathbf{v}_{l^*}\|_2 = 1\). Since it has negligible weight in the global feature space (e.g., \(\sigma_{l^*} < \sum \sigma_i\)). \par
(2) Error Lower Bound: 
when \(\mathbf{v}_{l^*}\) is missed:
\begin{equation}
   \|W_i^* - \tilde{W}_i^{\textup{CUR}}\|_F \geq \gamma_{l^*} \quad \forall i \in \mathcal{C}_{l^*}
\end{equation}
   Thus:
\begin{equation}
   \mathbb{E}_{\mathcal{P}} \left[\max_i \|W_i^* - \tilde{W}_i^{\textup{CUR}}\|_F\right] \geq \gamma_{l^*} \cdot \eta_{l^*}
\end{equation}
where \(\gamma_{l^*} = \sup_{\mathcal{P}: \mathbf{v}_{l^*} \notin \text{span}(C_{\mathcal{P}})} \|W_i^* - \tilde{W}_i^{\textup{CUR}}\|_F\) (minimum error when \(\mathbf{v}_{l^*}\) is missed globally).

Step 3: Upper Bound for Local Error. \par
For any cluster \(\mathcal{C}_l\), local pivot selection satisfies:\par
(1) Feature Coverage Property:
\begin{equation}
   \mathbb{P}_{\mathcal{P}_l}(\mathbf{v}_l \in \text{span}(C_{\mathcal{P}_l})) \geq 1 - \epsilon_l \quad (\epsilon_l \leq e^{-c|\mathcal{P}_l|})
\end{equation}
   because \(\mathbf{v}_l\) is prominent within \(\mathcal{C}_l\) (by Assumption 1).

(2) Error Decomposition:
\begin{equation}
   \|W_i^* - \tilde{W}_i\|_F \leq \underbrace{\|M_l - \tilde{M}_l\|_F}_{\text{cluster center error}} + \underbrace{\|W_i^* - M_l\|_F}_{\leq \delta_l}
\end{equation}
   where \(\tilde{M}_l = C_{\mathcal{P}_l} U_{\mathcal{P}_l} R_{\mathcal{P}_l}\).

(3) Low-Rank Approximation:
   By Assumption 1 (\(\text{rank}(M_l) \leq r_l\)) and CUR theory:
\begin{equation}
   \|M_l - \tilde{M}_l\|_F \leq (1 + \sqrt{r_l})\|M_l - M_l^{(r_l)}\|_F = 0
\end{equation}
   when \(\mathbf{v}_l \in \text{span}(C_{\mathcal{P}_l})\). Hence:
\begin{equation}
   \mathcal{E}_l^{\text{local}} \leq \delta_l \quad \text{with probability} \geq 1 - \epsilon_l
\end{equation}
where \(\delta_l = \max_{i \in \mathcal{C}_l} \|W_i^* - M_l\|_F\) (intra-cluster consistency measure).

(4) Expectation Control:
\begin{equation}
    \small
   \mathbb{E}[\mathcal{E}_l^{\text{local}}] \leq \delta_l (1 - \epsilon_l) + (\text{diam}(\mathcal{C}_l) + \|M_l\|_F) \epsilon_l \leq \delta_l + \mathcal{O}(\epsilon_l)
\end{equation}
where \(\text{diam}(\mathcal{C}_l) = \max_{i,j \in \mathcal{C}_l} \|W_i^* - W_j^*\|_F\).

Step 4: Global vs. Local Error Comparison:  
\begin{equation}
   \mathbb{E}[\mathcal{E}_{\text{global}}] \geq \gamma_{l^*} \eta_{l^*}, \quad \mathbb{E}[\mathcal{E}_{\text{local}}] \leq \max_l (\delta_l + \mathcal{O}(\epsilon_l))
\end{equation}

From Definition 1 (\(\gamma_{l^*} \eta_{l^*} > \delta_l + \epsilon_l\)), the theorem follows.
First, global error lower bound depends on rare feature miss probability \(\eta_{l^*}\) and minimum miss error \(\gamma_{l^*}\). Then, local error upper bound is controlled by intra-cluster consistency \(\delta_l\) and feature coverage failure \(\epsilon_l\).

\begin{table*}[ht]
\centering
\scalebox{0.80}{
\begin{tabular}{lcccccc}
\toprule
\textbf{Hyperparameter} & \multicolumn{2}{c}{\textbf{GSM8K}} & \multicolumn{2}{c}{\textbf{CodeAlpaca}} &  \multicolumn{2}{c}{\textbf{Saferpaca}} \\
\cmidrule(lr){2-3} \cmidrule(lr){4-5} \cmidrule(lr){6-7}
 & Llama-3 & Mistral & Llama-3 & Mistral & Llama-3 & Mistral\\
\midrule
Rank/$\alpha$ & 128 / 256 & 128 / 256 & 128 / 256 & 128 / 256   & 128 / 256 & 128 / 256 \\
Learning Rate & 5e-5 & 3e-5 & 5e-5 & 5e-5 & 1e-4 & 1e-4 \\
Dropout & 0.05 & 0.05 & 0.05 & 0.05  & 0.05 & 0.05 \\
Optimizer & AdamW & AdamW & AdamW & AdamW  & AdamW & AdamW\\
Batch Size & 32 & 32 & 32 & 32 & 32 & 32\\
Warmup Steps & 0 & 0 & 0 & 0  & 0 & 0\\
Epochs & 3 & 3 & 2 & 2 & 2 & 2\\
\bottomrule
\end{tabular}}
\caption{Hyperparameter settings for ID-LoRA on GSM8K, CodeAlpaca, and Saferpaca datasets. The ID-LoRA parameters are placed at the q, k, v, o, gate, up, and down positions.}
\label{tab:hparam_safety}
\end{table*}

\begin{table}[ht]
    \centering
  \scalebox{0.80}{
    \begin{tabular}{lcc}
        \toprule
        Method& ID-LoRA&  ID-LoRA \\
        \midrule
        Base Model& Llama-3&Mistral\\
        Rank $r$& 128&  128 \\
        $\alpha$       & 256& 256\\
        Learning Rate  & 5e-5&5e-5\\
        Dropout        & \multicolumn{2}{c}{0.05} \\
        Optimizer      & \multicolumn{2}{c}{AdamW}\\
        Batch size    & \multicolumn{2}{c}{32}\\
        Warmup Steps  & \multicolumn{2}{c}{0} \\
        Epochs        &\multicolumn{2}{c}{2}\\
        Where         &\multicolumn{2}{c}{q, k, v, o, gate, up, down}\\
        \bottomrule
    \end{tabular}}
    \caption{Hyperparameter settings for ID-LoRA on the multi-task dataset}
    \label{tab:hparam_multi}
\end{table}

\begin{table*}[ht]
    \centering
     \scalebox{0.80}{
    \begin{tabular}{l c | c  | c | c | c | c }
    \toprule
    &  & \multicolumn{1}{c|}{LoRA} & \multicolumn{1}{c|}{DoRA} & \multicolumn{1}{c|}{HydraLoRA} & \multicolumn{1}{c|}{MoELoRA} & \multicolumn{1}{c}{ID-LoRA} \\
    & Rank & \# parameters &  \# parameters & \# parameters & \# parameters&  \# parameters \\
    \midrule 
    \parbox[t]{4mm}{\multirow{3}{*}{\rotatebox[origin=c]{90}{\textsc{llama3-8B}}}} 
    
    & 8 & 21.0M & 22.3M & 46.7M & 88.9M & 7.7M \\ 
    & 16 & 41.9M &43.3M & 89.7M  & 172.8M &10.4M \\
    & 32 & 83.9M & 85.3M & 175.7M  & 340.5M & 16.0M \\
    & 64 & 167.8M & 169.1M & 347.7M  & 676.1M & 27.0M \\
    & 128 & 335.5M &336.9M & 691.6M  & 1.3B & 49.0M \\
    
    \bottomrule
    \end{tabular}
    }
    \caption{Number of trainable parameters for different methods on  LLaMA-3-8B}
    \label{tab:params}
\end{table*}

\section{Experimental Details and Additional Results}
This appendix provides comprehensive experimental details omitted from the main text due to space constraints, along with additional results that further validate the effectiveness of our method. We first restate all hyper-parameter settings, and computational-resource specifications to ensure full reproducibility. We then conduct extended ablation studies on key design choices, examining how the selection of the cluster number $k$ influences multi-task performance, and we include additional multi-task evaluations that were omitted from the main paper. 
Table \ref{tab:params} presents the number of trainable parameters under identical rank settings for different methods on the LLaMA 3-8B model. These results show that ID-LoRA requires the fewest trainable parameters among baseline methods under the same rank.

\subsection{Hyper Parameter Settings}  \label{sec: Hyper Parameter Settings}

In this section, we provide detailed hyperparameter settings for the main experiments in the paper, which are presented in Tables~\ref{tab:hparam_safety} and~\ref{tab:hparam_multi}. 
Table~\ref{tab:hparam_safety} details the fine-tuning configurations of the ID-LoRA method in single-task settings for Math, Code, and Safety tasks, applied to two base models: LLaMA-3-8B and Mistral-7B.
Table~\ref{tab:hparam_multi} show the hyperparameter settings for multi-task scenarios.

\begin{table*}[htp]
    \centering
    \scalebox{0.80}{%
    \begin{tabular}{l|c|c|c|c|ccc}
        \toprule
        \multirow{2}{*}{\textbf{Method}} &
        \multirow{2}{*}{\textbf{\# Params (\%)}} &
        \multirow{2}{*}{\textbf{MATH}} &
        \multirow{2}{*}{\textbf{MMLU}} &
        \multirow{2}{*}{\textbf{CQA}} &
        \multicolumn{3}{c}{\textbf{CODE}} \\
        \cmidrule(lr){6-8}
        & & & & & \textbf{Pass@1} & \textbf{Pass@5} & \textbf{Pass@10} \\
        \midrule
        LoRA & 0.13 (r=4)   & 57.3 & 47.1 & 39.4 & 15.9 & 24.6 & 29.1 \\
             & 0.26 (r=8)   & 57.9 & 47.2 & 47.3 & 20.7 & 30.3 & 34.5 \\
             & 0.52 (r=16)  & 59.4 & 47.5 & 37.8 & 27.1 & 37.3 & 42.2 \\
             & 1.03 (r=32)  & 60.8 & 48.1 & 46.9 & 39.0 & 48.8 & 51.9 \\
        \midrule
        ID-LoRA & 0.19 (r=32)  & 57.7 & 45.2 & 44.6 & 24.5 & 32.7 & 36.2 \\
                & 0.32 (r=64)  & 60.0 & 51.0 & 61.0 & 34.9 & 47.7 & 53.7 \\
                & 0.56 (r=128) & 59.6 & 51.0 & 47.4 & 41.7 & 54.5 & 58.5 \\
                & 0.99 (r=256) & 60.7 & 51.8 & 64.8 & 43.4 & 55.9 & 59.9 \\
        \bottomrule
    \end{tabular}%
    }
    \caption{Performance comparison of LoRA and ID-LoRA on the LLaMA-3-8B backbone under multi-task instruction tuning. All methods are trained on the same mixture (Alpaca + GSM8k + 50\% CodeAlpaca) and evaluated on MATH, MMLU, CommonsenseQA (CQA), and HumanEval (Pass@1/5/10). These numbers correspond to the experiment shown in Figure~\ref{fig:placeholder} of the paper.}
    \label{tab:rlora}
\end{table*}

\subsection{Experimental Details of Parameter-Parity Performance}  \label{sec: Experimental Details of Parameter-Parity Performance}

The results in Table~\ref{tab:rlora} provide the detailed numerical data for Figure~\ref{fig:placeholder}, which could not be fully presented in the main text due to space constraints. As the proportion of trainable parameters rises from 0.13\% to 1\%, both methods improve on every task. ID-LoRA already surpasses LoRA at 0.56\% parameters, attaining 59.6 on MATH and 41.7 Pass@1 on HumanEval, and achieves the overall best performance at 0.99\% parameters (MATH 60.7, MMLU 51.8, HumanEval Pass@1 43.4).

\subsection{Ablation: Impact of Pretrained Parameter Cluster Number k} \label{sec: Ablation}
\begin{table}[htp]
    \centering
    \scalebox{0.80}{
    \begin{tabular}{c|c|c|c|ccc}
        \toprule
         \multirow{2}{*}{$k$}  & \multirow{2}{*}{\textbf{MATH}} & \multirow{2}{*}{\textbf{MMLU}} & \multirow{2}{*}{\textbf{COM}}& \multicolumn{3}{c}{\textbf{CODE}} \\
          &&& & \textbf{Pass@1} & \textbf{Pass@5} & \textbf{Pass@10}   \\
        \midrule
         
         2  & 38.3 & \textbf{48.7} & 64.4 & 29.0 & 34.7 & 37.1 \\
         4  & 38.1 & 48.5 & \textbf{65.6} & 29.4 & 35.9 & 38.2 \\
         6  & \textbf{39.6} & 46.9 & 65.1 & \textbf{30.6} & \textbf{36.5} & \textbf{38.4} \\
         8  & 38.1 & 45.5 & 64.7 & 30.2 & 36.0 & 38.2 \\
        \bottomrule
    \end{tabular}}
    \caption{Ablation on the number of pre-trained parameter clusters k under the multi-task setting. Results are obtained with the LLaMA-3.2-3B backbone. The table reports overall multi-task accuracy for each choice of $k$.}
    \label{tab:rlora-ablation}
\end{table}

To ablate the effect of the cluster count on model behavior, we conducted multi-task experiments on LLaMA-3.2-3B with ranks fixed at r=128 and cluster numbers set to $k=2,4,6,8$. The outcomes, reported in Table~\ref{tab:rlora-ablation}, guided our choice of $k=4$ (or $k=6$) for the main experiments.

As the cluster number $k$ increases, the amount of copied parameters in the pre-trained parameter matrix also rises accordingly, which reduces the distinction between different sub-matrices $A_i$, weakening the diverse combinatorial advantages brought by the routing mechanism. The performance decline at $k=8$ reflects this issue of diminished expressive power caused by excessive replication.

\end{document}